\pdfoutput=1
\documentclass[twocolumn, journal]{IEEEtran}
\usepackage[T1]{fontenc}
\usepackage{amsmath,amssymb,amsfonts}
\usepackage{amsthm}
\usepackage{algorithm}
\usepackage{algpseudocode}
\usepackage{graphicx}
\usepackage{booktabs}
\usepackage{array}
\usepackage{multirow}
\PassOptionsToPackage{dvipsnames}{xcolor}
\usepackage{xcolor}
\newtheorem{remark}{\bf Remark}
\usepackage{url}
\usepackage{cite}
\usepackage{bm}
\usepackage{dsfont}
\usepackage[colorlinks=true,allcolors=blue]{hyperref}

\newtheorem{proposition}{Proposition}

\begin{document}

\title{Communication-Efficient Digital-Twin Coordination for Heterogeneous LLM Embodied Agents over Computing Power Networks}

\author{Nuocheng~Yang, \emph{Student Member, IEEE}, Sihua~Wang, Zihan~Chen, \emph{Member, IEEE}, \\ Tony Q. S. Quek,~\IEEEmembership{Fellow,~IEEE}, and~Changchuan~Yin, \emph{Senior Member, IEEE}
\thanks{N. Yang, S. Wang, and C. Yin are with the Beijing Laboratory of Advanced Information Network, and the Beijing Key Laboratory of Network System Architecture and Convergence, Beijing University of Posts and Telecommunications, Beijing 100876, China (emails: \{yangnuocheng, sihuawang, ccyin\}@bupt.edu.cn).}
\thanks{Z. Chen and T. Q.~S. Quek are with the Information Systems Technology and Design Pillar, Singapore University of Technology and Design, 487372, Singapore (emails: zihan\_chen@mymail.sutd.edu.sg, tonyquek@sutd.edu.sg).}
}

\maketitle

\begin{abstract}
Embodied agent teams powered by heterogeneous large language models (LLMs) are being widely deployed in physical artificial intelligence such as smart factories, warehouses, and service robotics.
To enable collaboration among such an agent team, efficient coordination mechanisms that operate reliably under limited network resources are required.
However, existing heterogeneous LLM-agent coordination frameworks that rely on multi-round natural-language-based conversations introduce three coupled challenges.
First, inter-agent dialogue incurs communication overhead that grows rapidly with team size.
Second, the quality of coordination is constrained by the heterogeneous capabilities of the agent team's LLMs.
Third, agents may suffer from action delays due to iterative negotiation.
To address these challenges, we propose LDT-Coord, a networked coordination framework built upon a lightweight digital twin (DT).
Specifically, each agent independently selects its intended action and reports both the action decision and a structured temporal constraint over shared resources to the DT server, thereby decoupling coordination performance from natural-language reasoning ability.
Then, DT executes a training-free, rule-based orchestrator algorithm to resolve cross-agent conflicts and returns coordination instructions to prevent such conflicts.
To further reduce communication overhead, we formulate agent reporting control as a constrained partially observable Markov decision process (C-POMDP) and solve it with the PPO-Lagrangian algorithm.
Simulation results show that LDT-Coord achieves a task success rate comparable to conventional coordination methods while reducing communication overhead by more than $70\times$ and maintaining robustness under LLM heterogeneity.

\end{abstract}

\begin{IEEEkeywords}
Digital twin, multi-agent coordination, large language models, communication-efficient coordination.
\end{IEEEkeywords}

\section{Introduction}

Large language models (LLMs) have shown strong capability in understanding, reasoning, and generation \cite{Zhao2026LLMSurvey,Qin2026LLMNLP,Xu2025LargeReasoning}, which drives an embodied agents framework that can perceive, reflect, and act in a physical artificial intelligence world \cite{Wang2025EmbodiedIntelligence,Chowa2026LLMAgentsReview,Sun2025LLMMultiAgent}.
Compared with the conventional centralized framework, where a single server controls each agent, the embodied agents framework can enable distributed intelligence, thereby improving responsiveness, scalability, and robustness in dynamic environments.
As these agents scale from single models to teams, members may run LLMs with heterogeneous capabilities to fit each agent's task and resource budget \cite{Ahn2022SayCan,Driess2023PaLME,Brohan2023RT2,Liang2023CodeAsPolicies}.
To enable efficient coordination of such a heterogeneous agent team, several works focus on facilitating information exchange through multiple rounds of natural language (NL) dialogue \cite{Mandi2023RoCo,Wu2023AutoGen,Hong2023MetaGPT}. 
However, building coordination on top of mutual understanding of each other's NL output exposes three issues. 
First, at the communication level, the payload of multi-round NL negotiation grows rapidly with team size and number of rounds, which introduces an expensive communication cost. 
Second, at the heterogeneity level, the quality of NL dialogue suffers from a short-board effect, which may be significantly affected by the participant with the weakest LLM. 
Finally, at the mechanism level, agents are required to dialogue before acting, thereby introducing cooperation latency.

Existing work addresses these issues along three technical routes. 
To address the communication bottleneck, the authors in \cite{Salemi2025Blackboard,Zhang2026MiTa,Wang2025KARMA} proposed a shared-state and blackboard method that transforms mesh dialogue into star-shaped reads and writes over a shared memory.
However, the shared medium remains NL text, so vague descriptions written by agents with the weakest LLMs may continue to pollute the shared state, thereby reducing collaborative efficiency and performance. 
To address the heterogeneity bottleneck, the authors in  \cite{Parada2025IntNet,Su2025CommEfficientMARL,Charalambous2026GoalOriented,Hu2024CommFormer} studied multi-agent reinforcement learning communication routes that learn compact message vectors to bypass NL.
For the mechanism bottleneck, the authors in \cite{Liu2025COHERENT,Zhang2025LaMMAP,Li2025AdaptiveLLMOrchestration} introduce a centralized LLM planner that generates joint actions for all agents.
Although this architecture can avoid action conflicts through a single strong coordinator, it shifts distributed intelligence toward centralized coordination, which suffers from scalability difficulties as the number of agents grows \cite{Chen2024Scalable}.

Meanwhile, the digital twin (DT) serves as a high-fidelity digital mirror of a physical entity in the information space \cite{Zhou2025DTUAVTopology,Tran2025NetworkDT,Li2025DTSync}.
Through continuous state synchronization and bidirectional interaction, it has become a unified carrier for sensing, prediction, and control in networked systems \cite{Grieves2017Mitigating,Tao2019Industry}. In multi-agent settings, it has been further used as a centralized mirror and reconstruction tool for the team state, aggregating scattered local observations and action into a consistent global view \cite{Chen2025Goal,Zhang2021Scheduling}. However, existing DT-based schemes \cite{Chen2025Goal,Zhang2021Scheduling} still adopt a central-server architecture in which the DT actively generates a plan for each agent similar to \cite{Liu2025COHERENT,Zhang2025LaMMAP,Li2025AdaptiveLLMOrchestration}. 
This introduces uncontrollable latency and computational overhead, as DT must re-understand the physical world from scratch, even though agents have already formed a local understanding of their surroundings. 

To fill this gap, this paper proposes Lightweight Digital-Twin Coordination (LDT-Coord), which employs a lightweight DT as a coordination middleware to enable efficient collaboration among embodied agent teams driven by heterogeneous LLMs.
Specifically, each embodied agent autonomously perceives, reflects, acts, and reports its chosen action, together with structured temporal constraints on shared resources, to the DT for coordination.
Thus, the lightweight DT does not need to understand the environment from scratch as the traditional method, thereby saving computational and communication resources. 
Instead, it uses a unified lightweight orchestrator that operates over a shared set of structured coordination primitives, detects and avoids potential conflicts in the cooperation process by rules, and returns efficient coordination. 
The main contributions of this paper are summarized as follows.
\begin{itemize}
\item \textbf{Lightweight-DT coordination framework with structured communication primitives.} We use a lightweight DT as the coordination middleware for heterogeneous LLM agents and define structured primitives, namely the action tuple, the typed constraint declaration, and a short downlink instruction, which turn coordination from mutually understanding language into reporting structured constraints and decouple coordination quality from the language ability of the reporter.

\item \textbf{Training-free unified rule-based orchestrator for atomic-task conflict avoidance.} We formalize the mutual-exclusion, synchronization, and dependency conflict types into typed coordination rules and design a unified orchestrator applied until convergence to find the maximal consistent executable set without any training.

\item \textbf{Learned communication-selection layer for latency-constrained reporting.} We model the decision of which agents report at each step as a constrained partially observable Markov decision process (C-POMDP) under a per-step latency constraint and solve it with PPO-Lagrangian, which compresses state-reporting communication substantially while keeping coordination quality nearly lossless.
\end{itemize}

Experiments show that, compared with the traditional NL-dialogue coordination strategy among agents, LDT-Coord attains a comparable success rate while reducing communication by more than 70$\times$ on the Confined-Space Sorting task, and it stays robust across heterogeneous team configurations of different scales.

\section{Related Work}

\subsection{Natural-Language Dialogue-Based Multi-Agent Coordination}

Recently, a number of works that employ the NL dialogue route have been studied to enable LLM agents to cooperate.
The authors in \cite{Mandi2023RoCo} proposed a dialogue-based scheme that lets multiple robotic arms discuss task assignment and collision-free paths through multi-round NL exchange.
The authors in \cite{Wu2023AutoGen} introduced a conversable-agent framework that orchestrates multi-agent dialogue flows through a group-chat manager.
The authors in \cite{Hong2023MetaGPT} proposed a procedure-driven scheme that encodes standardized operating procedures into the communication among agents to reduce cascading errors.
The authors in \cite{Salemi2025Blackboard} systematized a shared-memory architecture so that agents coordinate by reading and writing a blackboard rather than talking pairwise.
The authors in \cite{Li2023CAMEL} drove two-agent cooperation through a role-playing dialogue mechanism.
The authors in \cite{Qian2023ChatDev} organized software development as a multi-role dialogue pipeline.
These frameworks coordinate from the perspective of agents understanding one another through language outputs.
Under heterogeneous deployment, the communication quality is dictated by the weakest party. Moreover, intent is not shared explicitly, so each agent must infer the plans of others from their text.

To improve the dialogue itself, another line introduces reflection, debate, and interleaved reasoning.
The authors in \cite{Shinn2023Reflexion} proposed a self-reflective scheme that lets an agent iteratively improve its decisions through verbalized self-feedback.
The authors in \cite{Du2023MultiAgentDebate} proposed a multi-party argument scheme that improves factuality and reasoning through debate among agents.
The authors in \cite{Yao2022ReAct} proposed a reasoning-acting scheme that interleaves reasoning with action and environment feedback to stabilize embodied decisions.
The authors in \cite{Park2023GenerativeAgents} proposed a memory-driven scheme that simulates believable behavior through NL memory and reflection.
The authors in \cite{Wang2023Voyager} introduced a curriculum scheme for continual skill acquisition in an open world.
The authors in \cite{Liu2023DyLAN} proposed a dynamic agent network that forms teams by selecting agents according to contribution.
These methods raise dialogue quality but spend many time steps on negotiation that yields no motion, while weak agents keep polluting the shared context.
More fundamentally, these frameworks generally adopt a mesh dialogue topology. As a result, the negotiation links, payload, and latency grow rapidly with team size, which is especially acute in heterogeneous, link-constrained deployment.

\subsection{Centralized and Hierarchical Multi-Robot Task Coordination}

To circumvent the inherent defect of mesh dialogue, a number of works turn to centralized and hierarchical coordination. A central planner then generates actions for all agents in a unified fashion.
The authors of \cite{Liu2025COHERENT} proposed a centralized orchestration scheme that decomposes and dispatches subtasks across a heterogeneous team of agents.
The authors in \cite{Liu2023LLMP} introduced a hybrid scheme that couples an LLM with a classical symbolic planner to obtain feasible long-horizon plans.
The authors in \cite{Lin2023Text2Motion} proposed a language-to-motion scheme that generates executable motion sequences from NL instructions.
These works remove conflicts from the perspective of centralized planning, but their coordination quality depends entirely on the capability of the central model. They also require uploading all observations to the center, so communication and computation scale poorly with team size.

Classical multi-robot task allocation instead characterizes coordination from the perspective of combinatorial optimization and scheduling.
The authors in \cite{Gerkey2004Taxonomy} introduced a formal taxonomy and analysis of multi-robot task allocation.
The authors in \cite{Wang2025MATP} proposed a human-robot collaborative assembly planning method based on LLM agents.
These methods assume homogeneous capability and sufficient communication by default and do not target training-free coordination of heterogeneous LLM teams under constrained links.
Therefore, the centralized route trades a single strong center for coordination quality, which runs counter to the decentralized autonomy and heterogeneity-robust objectives pursued in this paper.

\subsection{Multi-Agent Communication and Learned Coordination}

To circumvent the bloated negotiation payload and the message meaning that fluctuates with the sender model, multi-agent reinforcement learning has widely explored the learned communication route.
The authors in \cite{Sukhbaatar2016CommNet} proposed a differentiable communication scheme that learns continuous messages by back-propagating across a shared channel among agents.
The authors in \cite{rashid2018qmix} proposed a value-decomposition scheme that supports centralized training with decentralized execution through monotonic mixing.
The authors in \cite{Ding2024MultiLevelComm} introduced a multi-level communication scheme that lets agents exchange messages in a determined order to coordinate their decisions.
These works improve coordination by end-to-end learning of compact protocols. However, their messages and policies must be jointly trained and lack interpretability, so they are difficult to plug into a training-free heterogeneous LLM team.

The theoretical frameworks of decentralized partially observable coordination further characterize the difficulty of this problem.
The authors in \cite{oliehoek2016decpomdpbook} introduced a systematic formalization that reveals the intrinsic complexity of solving joint policies under partial observability.
The authors in \cite{duan2024gacg} proposed a group-aware coordination graph that characterizes the coordination structure among agents.
These methods rigorously characterize coordination among multiple agents from a re-learning perspective, but they still presuppose a joint policy obtained through retraining. This introduces additional training overhead and is difficult to combine quickly with existing teams.

\subsection{Distributed Consistency and Consensus Control}

Concurrency control and consensus in distributed systems provide classical mechanisms for multiple nodes to stay consistent without a central arbiter.
The authors in \cite{Kung1981Optimistic} proposed an optimistic concurrency control strategy that lets concurrent transactions execute first and validate conflicts at commit time with rollback as needed.
The authors in \cite{Lamport1978LogicalClocks} introduced a logical-clock scheme that characterizes the causal precedence among distributed events through a happens-before partial order.
The authors in \cite{Lamport1998Paxos} proposed a consensus protocol for strongly consistent replication across unreliable nodes.
The authors in \cite{Shapiro2011CRDT} proposed conflict-free replicated data types that converge through algebraic structure without explicit coordination.
These mechanisms guarantee correctness from the perspective of reaching agreement on shared state, but their nodes are homogeneous and faithful. Moreover, the objects they coordinate are data reads and writes or replica state, so they neither carry the action intent produced by heterogeneous LLMs nor handle physical-reachability constraints.

Multi-agent consensus from a control-theoretic perspective is closer to networked coordination.
The authors in \cite{ru2025secureconsensus} proposed a secure consensus protocol that gives robust convergence under mixed attacks.
The authors in \cite{liu2025doublelayerconsensus} proposed a double-layer finite-time consensus scheme for heterogeneous dynamics.
The authors in \cite{Lu2021Blockchain} proposed a blockchain-based scheme for trustworthy collaboration in digital-twin edge networks.
These works characterize the convergence of continuous physical quantities to a common value under disturbance and assume homogeneous agents by default. This focus is orthogonal to the discrete action conflicts of heterogeneous LLMs over discrete shared resources targeted here.

\subsection{Digital Twin for Networked Coordination}

The DT provides a digital replica of the physical world for networked systems \cite{Grieves2017Mitigating,Tao2019Industry}.
A recent line of work treats the DT as a coordination layer for multi-agent and multi-robot teams.
The authors in \cite{Xu2023digital} proposed a digital-twin scheme that trains a multi-agent scheduler to orchestrate heterogeneous edge-end devices and finish deadline-constrained jobs.
The authors in \cite{Tang2023digital} proposed a digital-twin scheme that acts as a coordination layer to assign and reassign tasks across a multi-UAV fleet.
The authors in \cite{Zhang2021Scheduling} introduced a digital-twin scheme for dynamic manufacturing-workshop scheduling based on multi-agent deep reinforcement learning.
These works let the DT allocate tasks or schedule jobs across a team. However, the twin coordinates assignment decisions rather than the discrete action intent that heterogeneous agents have already expressed in structured form.

Another line turns the DT into an active control layer that directly drives online physical-world decisions.
The authors in \cite{Zhou2025DTUAVTopology} proposed an end-edge collaborative framework that uses a spatio-temporal twin to online command a UAV team to adjust its neighbor topology for multi-target tracking.
The authors in \cite{Hevesli2024DTTaskOffloading} introduced a digital-twin edge air-ground scheme that jointly schedules UAV trajectories, device association, and task offloading through the twin's real-time prediction.
The authors in \cite{Wang2024SmartMobilityDT} proposed a smart-mobility platform that actively coordinates the routes of connected automated vehicles through cloud twin state.
The authors in \cite{Abishu2024DTResourceAllocation} introduced an internet-of-vehicles twin network that drives adaptive resource allocation and task offloading.
These works confirm the feasibility of a twin actively driving physical-world decisions. However, the objects they coordinate are continuous network resources, trajectories, or offloading ratios, and their mechanism is centralized optimization of a single global objective. 
As a result, they neither target the discrete action conflicts of heterogeneous LLM decision-makers nor arbitrate the action intent that each agent has already expressed in structured form.
At a wider scale, the computing power network (CPN) paradigm casts coordination as the joint orchestration of heterogeneous cloud-edge-end computing resources together with the network that connects them \cite{Tang2022diten}.
The authors in \cite{Tang2023siats} proposed an intent-aware scheduling framework that matches application demands to heterogeneous computing nodes and network paths. 
The digital-twin schedulers discussed above \cite{Xu2023digital,Tang2023digital} already act as computing-power-aware orchestration planes at the edge-end tier. 
However, these works orchestrate divisible computing jobs and continuous resources under a single global objective, rather than the discrete action intent of heterogeneous LLM decision-makers. 
Despite this progress, no existing route can actively coordinate the discrete action intent of heterogeneous LLM agents under training-free, low-communication, and heterogeneity-robust conditions, which is precisely the gap targeted by the system model and problem formulated in the next section.

Table~\ref{tab:related_work} organizes the comparison around the novelty of this paper and contrasts the key differences between existing routes and LDT-Coord.

\begin{table*}[t]
\centering
\caption{Comparison of Related Work}
\label{tab:related_work}
\renewcommand{\arraystretch}{1.3}
\small
\setlength{\tabcolsep}{3pt}
\begin{tabular}{@{}p{3cm}p{5cm}p{3cm}p{1.8cm}p{3.0cm}p{1.1cm}@{}}
\toprule
Category & Representative Works & Comm.\ Topology & Shared Intent & Heterogeneity-Robust & Training-Free \\
\midrule
NL dialogue multi-agent & RoCo \cite{Mandi2023RoCo}, AutoGen \cite{Wu2023AutoGen}, MetaGPT \cite{Hong2023MetaGPT}, Blackboard \cite{Salemi2025Blackboard} & Mesh / shared state & No & Weak & Yes \\
Centralized / hierarchical planning & COHERENT \cite{Liu2025COHERENT}, SMART-LLM \cite{Kannan2023SMART} DT-based method \cite{Xu2023digital,Tang2023digital,Zhang2021Scheduling}& Star $\leftrightarrow$ center & Yes & Depends on strong center & Yes \\
Learned multi-agent comm. & CommFormer \cite{Hu2024CommFormer}, GACG \cite{duan2024gacg}, ML-Comm \cite{Ding2024MultiLevelComm} & Learned topology & Implicit & Needs retraining & No \\
Distributed consistency / consensus & Paxos \cite{Lamport1998Paxos}, Secure Consensus \cite{ru2025secureconsensus,liu2025doublelayerconsensus} & Peer network & N/A & Homogeneous nodes & Yes \\
\textbf{LDT-Coord (ours)} & \textbf{This paper} & \textbf{Star $\leftrightarrow$ DT} & \textbf{Yes} & \textbf{Strong} & \textbf{Yes}\\
\bottomrule
\end{tabular}
\end{table*}
\section{System Model and Problem Formulation}
\label{sec:system_model}

\begin{figure}[t]
  \centering
  \includegraphics[width=\linewidth]{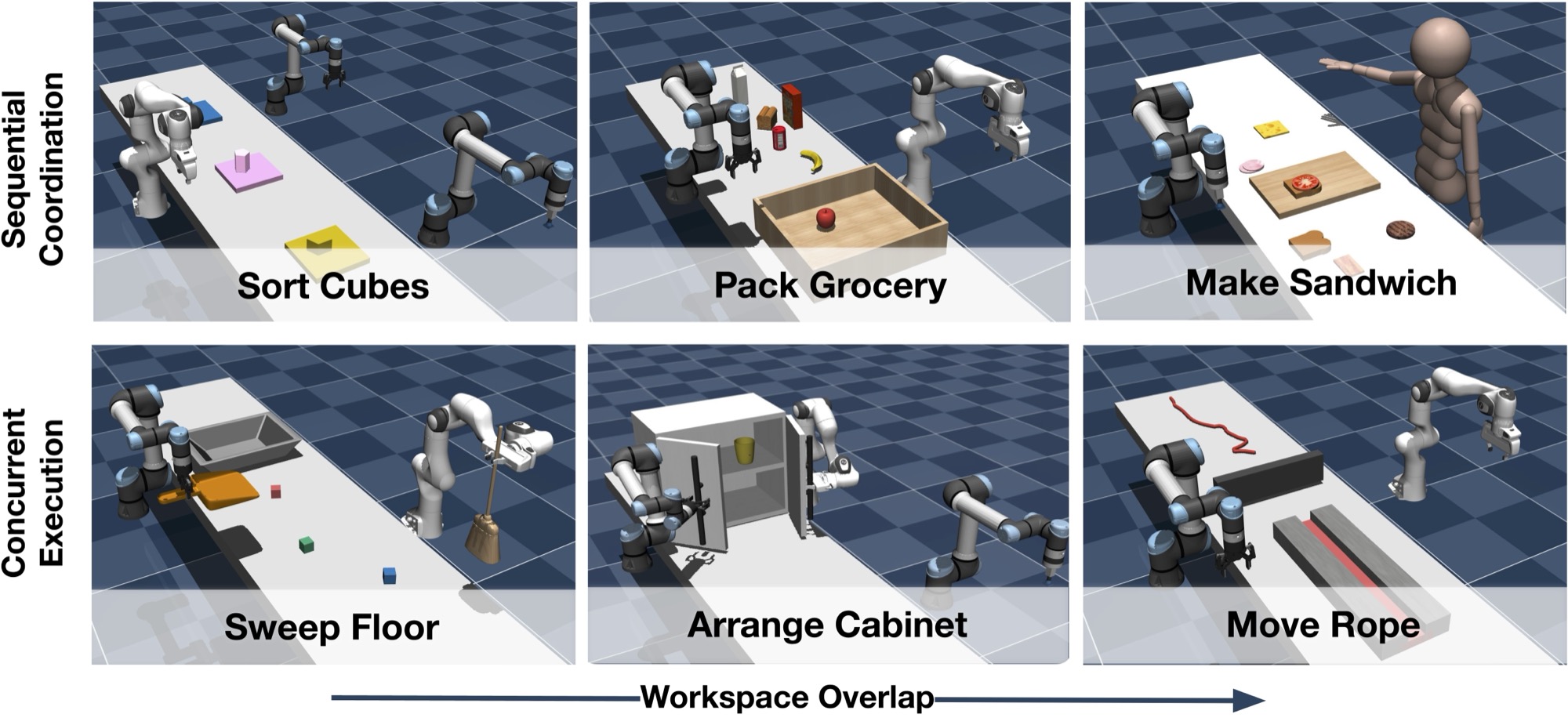}
  \caption{The six multi-arm collaboration tasks considered in this work, spanning sequential coordination and concurrent execution under increasing workspace overlap~\cite{Mandi2023RoCo}.}
  \label{fig:tasks}
\end{figure}
\begin{figure}[t]
  \centering
  \includegraphics[width=\linewidth]{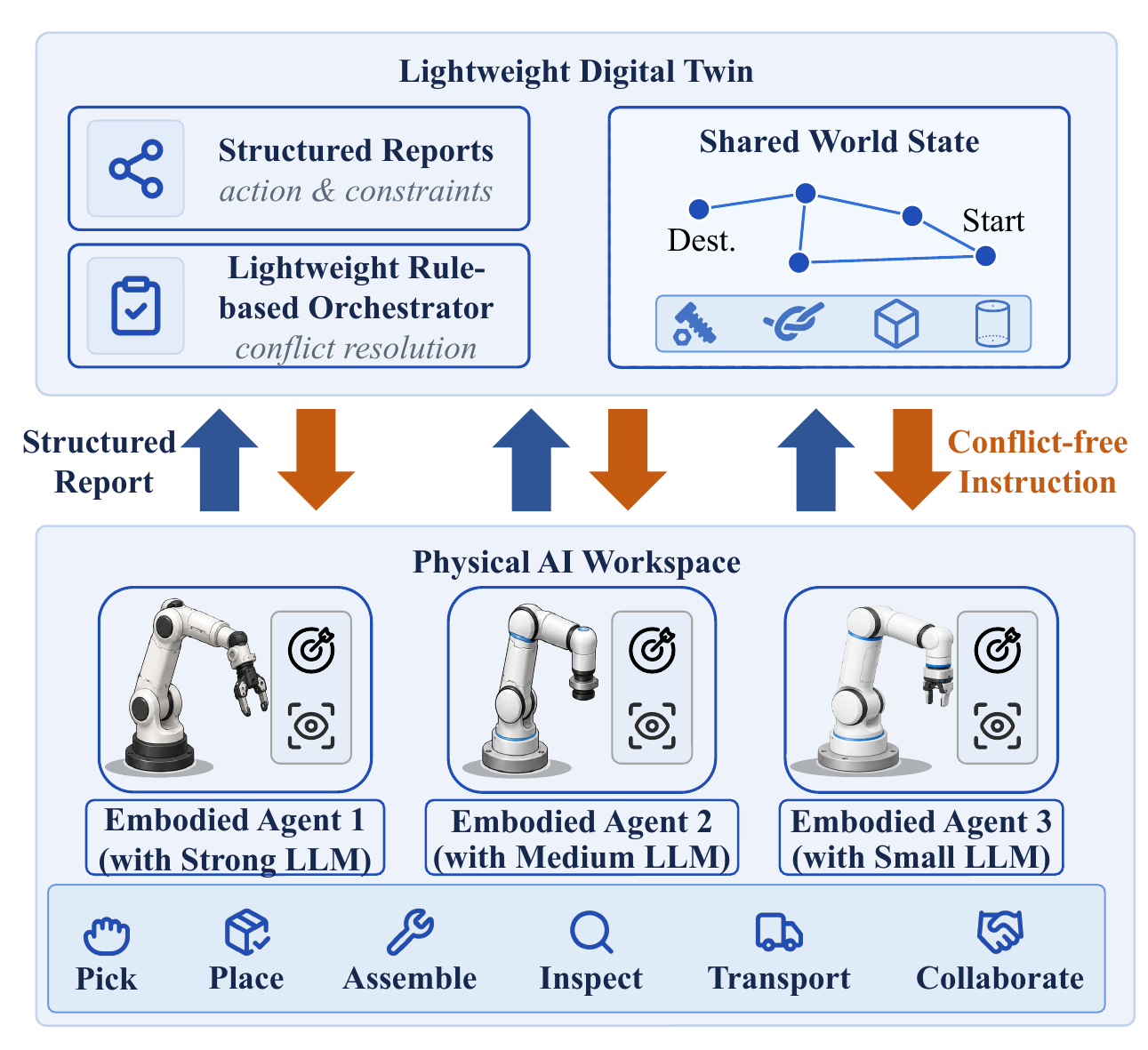}
  \caption{Illustration of the considered system model.}
  \label{fig:system_overview}
\end{figure}

We consider a collaborative team of $n$ heterogeneous LLM-driven embodied agent arms $\mathcal{N}=\{1,\dots,n\}$ that jointly complete a task $g$ in a shared physical workspace (e.g., sort cubes, pack groceries) as shown in Fig. \ref{fig:tasks}.
Each arm $i$ is powered by an LLM $M_i$ whose capability may differ across agents and that can decide its own action individually based on observing only a local projection of the physical state. 
To enable efficient collaboration between arms, a coordination middleware DT $\mathcal{D}$ is introduced.
It derives all of its state observations from structured reports on arms and issues conflict-free plans to agents through a rule-based policy. 
Specifically, at each discrete time step $t$, each arm and $\mathcal{D}$ execute the closed-loop of the following four stages, as shown in Fig. \ref{fig:system_overview}.

\begin{enumerate}
  \item \textbf{State acquisition.} Each arm $i$ obtains its observable state and concatenates it with the task objective $g$ to form its local context.
  \item \textbf{Action selection.} Each arm $i$  autonomously selects an action within its legal action set based on its local context. 
  It also generates the constraint declaration that the selected action imposes on the operated objects (e.g., whether the action requires other arms to cooperate with or to avoid it).
  \item \textbf{Intention reporting.} Each arm $i$ packages its intended action together with the constraint declaration into a structured message and reports it to the $\mathcal{D}$. Each arm $i$ executes the selected action simultaneously.
  \item \textbf{Conflict avoidance.} 
  After aggregating the received messages, $\mathcal{D}$ computes the maximal consistent executable action set following conflict-avoiding orchestration rules of the $n$ arms in this slot. 
  Then, DT sends termination instructions to the vetoed arms so as to avoid conflicts.
\end{enumerate}

These four stages repeat until the task is completed.

\subsection{Local State Acquisition}
\label{subsec:state}

Coordination among the arms takes place in a shared physical workspace with a set of operable entities $\mathcal{E}=\{1,\cdots,E\}$.
Each object $e\in\mathcal{E}$ position on a three-dimensional position $x_t(e)\in\mathbb{R}^3$ at time slot $t$.
The task objective $g$ is to gradually transport every object from its initial position to its respective target position $x_e^\star\in\mathbb{R}^3$.
We introduce the discrete placement points where an object can rest into a set of shared resources (e.g., workstations, handover panels, slots, and other placement positions). 
This placement set is denoted as $\mathcal{P}$, where each resource $p\in\mathcal{P}$ corresponds to a reference three-dimensional position $\bar x_p$. 
For transport tasks that require several relay steps, the resources naturally carry an order along the transport flow.
We record the step position of resource $p$ in this flow as the integer $\mathrm{pos}(p)\in\mathbb{Z}$. 
We can also define the resource currently occupied by object $e$ is the resource closest to its position, given by $\ell_t(e)=\arg\min_{p\in\mathcal{P}}\|x_t(e)-\bar x_p\|$. 
Symmetrically, its target resource is the resource closest to the target position $x_e^\star$, derived as $p_e^\star=\arg\min_{p\in\mathcal{P}}\|x_e^\star-\bar x_p\|$. 
One transport task thus amounts to $\ell_t(e)$ advancing along the order toward $p_e^\star$ until the endpoint is reached. 
Besides the position state, we further denote logical condition state (e.g., cabinet door open or closed) as $u_k\in\{0,1\}$, where $u_k=1$ means the condition holds, and $u_k=0$ otherwise.
All logical conditions form the set $\mathcal{X}$, whose true members at time $t$ form the current logical state set $W_t\subseteq\mathcal{X}$.
Each arm $i$'s reach band $\mathcal{B}_i\subseteq\mathcal{P}$ is the subset of resources that its end effector can touch, with its gripper holding at most one object per step. 
Then we can define the shared band reachable by two or more arms as
\begin{equation}
\mathcal{P}^{\mathrm{sh}}=\{p\in\mathcal{P}\mid |\{i\mid p\in\mathcal{B}_i\}|\ge 2\}.
\end{equation}
$\mathcal{P}^{\mathrm{sh}}$ can be the panels on which arms hand objects over in relay sorting, and the common grasp positions of two arms co-lifting a rope.
The local observation of each agent $i$ is given by a deterministic partially observable projection operator $\Pi_i$. 
This operator keeps only the objects falling within its reach band that have not yet reached their target resource, together with the resources they occupy, which is expressed as
\begin{equation}
\Pi_i(s_t)=\big\{(e,\ell_t(e),W_t)\mid \ell_t(e)\in\mathcal{B}_i,\ \ell_t(e)\neq p_e^\star\big\}.
\end{equation}
The observation of arm $i$ used for action decision is the local context and the task objective, which is given by 
\begin{equation}
c_{i,t}=(\Pi_i(s_t),g),
\end{equation}
where $s_t$ is the global physical state at time $t$ that aggregates the three-dimensional positions of the above objects, the gripper holdings, and the logical conditions.
The task objective $g=\{\varsigma_e\}_{e\in\mathcal{E}^\star}$ is a set of per-object success criteria, where $\mathcal{E}^\star\subseteq\mathcal{E}$ collects the objects involved in the task objective. 
Whether object $e$ is transported successfully is judged directly by whether its three-dimensional position has reached the expected position, which is given by
\begin{equation}
\varsigma_e(s_t)=\begin{cases}1,&\|x_t(e)-x_e^\star\|\le\epsilon_{\mathrm{pos}},\ \forall e\in\mathcal{E}^\star,\\[2pt]0,&\text{otherwise},\end{cases}
\end{equation}
where $\epsilon_{\mathrm{pos}}$ tolerance ball around its target position $x_e^\star$.

To further describe the actions of the embodied robots, we decompose the multi-embodied-robot collaborative task into the three most basic classes of atomic tasks\footnote{In this paper an atomic task refers to the most basic and indivisible temporal coordination relation between two shared-resource actions, and this decomposition of the pair of basic control-flow relations follows the characterization of workflow patterns \cite{vanderAalst2003WorkflowPatterns}.}, which named mutual exclusion, synchronization, and dependency as follows.
\begin{enumerate}
  \item \textbf{Mutual exclusion (e.g., sorting).} 
  The task objective is to move each entity toward its target position via several cross-arm relays. 
  When two arms contend for the entity on the same panel in the same slot, a mutual-exclusion conflict arises, so that $\mathcal{D}$ admits the agent with higher priority to avoid this conflict.
  \item \textbf{Synchronization (e.g., rope co-lifting).} 
  The task objective is to lift the different sides of the object by the arms simultaneously and place them at a designated position. 
  In each slot, the arms must ensure that the other arms are in the same lifting group. 
  \item \textbf{Dependency (holding to fetch).} 
  The task objective is to take a target object under a specific state.
  To achieve this, one arm maintains the required logical state, while the other fetches the target object.
  The fetching action is valid only when the required logical state holds (i.e., $u_k=1$).
  In each slot, the state-maintaining arm produces and maintains the logical state $u_k$, while the fetching arm declares a persistent dependency on this state.
  Then $\mathcal{D}$ admits the fetching action only when another arm maintains the required logical state in the same slot, which imposes an enforced ordering with in-between timing.

\end{enumerate}

\subsection{Autonomous Action Selection}
\label{subsec:action}

In each slot, every embodied robotic arm must select a suitable action according to its local observation $c_{i,t}$. 
The full action space is $\mathcal{O}=\{a_0\}\cup(\mathcal{E}\times\mathcal{P})$, where $a_0$ denotes staying still, that is, initiating no action in this slot. The pair $(e,p)$ denotes grasping entity $e$ and placing it onto resource $p$. 
Constrained by partial observability and the reach band, the legal action set of the arm is given by a deterministic mask $\mathcal{O}_i^{\mathrm{legal}}(s_t)=\{a_0\}\cup\{(e,p)\mid\ell_t(e)\in\mathcal{B}_i,\ p\in\mathcal{B}_i,\ \nu(e,p)=1\}$. 
Here, the task-validity predicate $\nu\in\{0,1\}$ is given explicitly by
\begin{equation}
\begin{aligned}
\nu(e,p)&=\mathds{1}\Big[\,p=p_e^\star\ \lor\ \big(p\in\mathcal{P}^{\mathrm{sh}}\ \\
& \wedge\ |\mathrm{pos}(p)-\mathrm{pos}(p_e^\star)|<|\mathrm{pos}(\ell_t(e))-\mathrm{pos}(p_e^\star)|\big)\,\Big].
\end{aligned}
\end{equation}
$\nu(e,p)=1$ imply that move object $e$ to plane $p$ is available, and $\nu(e,p)=0$ otherwise. 
$\nu(e,p)=1$ if $p$ is exactly the target resource of $e$, or $p$ is a shared relay resource that lies strictly closer to the target $p_e^\star$ in order distance $\mathrm{pos}$.
The reference is the distance of the resource $\ell_t(e)$ currently occupied by $e$ to $p_e^\star$, and $\nu(e,p)=0$ otherwise. 
On this legal action set, arm $i$ further uses its own LLM $M_i$ together with the local context $c_{i,t}$ to score each feasible action. 
It takes the highest-scoring one as the action selected in this step, which is given by
\begin{equation}
a_{i,t}=\arg\max_{a\in\mathcal{O}_i^{\mathrm{legal}}(s_t)}\ \frac{1}{|a|}\sum_{k}\log p_{M_i}\!\left(a_k\mid c_{i,t}\right),
\end{equation}
where $p_{M_i}(\cdot\mid c_{i,t})$ is the conditional token likelihood of the LLM $M_i$ and $|a|$ is the action token length. Since the scoring is performed over the enumerated legal set rather than by free generation, no illegal action can be produced. The action depends only on $M_i$ and $c_{i,t}$, and heterogeneity is introduced by the capability differences among $\{M_i\}$.
After selecting the suit action $a_{i,t}$, each arm $i$ executes the selected action immediately.

\subsection{Structured Intention Reporting}
\label{subsec:report}

After selecting its own action, each arm $i$ must further impose constraints on the objects to be operated on to ensure that the task can be completed correctly. 
To this end, we design three principles to address the respective coordination needs of the mutual-exclusion, synchronization, and dependency atomic tasks.

\textbf{Mutual-exclusion field.} Coordination $\xi_{i,t}\subseteq\mathcal{E}\cup\mathcal{P}$ lists the entities and resources that this step's action needs to occupy exclusively. 
This means that no two admitted actions may occupy the same element of $\xi$ in the same slot.
It grounds the relation that two actions cannot proceed at the same time into an occupancy declaration on concrete shared elements.

\textbf{Synchronization field.} Coordination $\zeta_{i,t}\subseteq\mathcal{N}$ lists the members of the joint group to which this step's action belongs, namely the roster of initiators of a set of actions that must execute simultaneously. 
The group takes effect as a whole if and only if the group members consistently name each other in this slot and all their actions are admitted; otherwise, a collision occurs.
This expresses the relation that they must act simultaneously.

\textbf{Dependency field.} An arm characterizes its production of and demand for the shared logical states $\mathcal{X}$ with two pairs of state fields. On the supply side, $\psi_{i,t}\subseteq\mathcal{X}$ is the state instantaneously produced by this step's action and $\chi_{i,t}\subseteq\mathcal{X}$ is the state continuously held by this step's action. On the demand side, $\alpha_{i,t}\subseteq\mathcal{X}$ is the prerequisite state that this step's action needs. This state must be produced in the same step by some action of this step, that is an instantaneous dependency. Correspondingly, $\omega_{i,t}\subseteq\mathcal{X}$ is the state that this step's action needs and that must be held in the same step by some action of this step, that is a persistent dependency. This pair of demand and supply declarations expresses the relation that ordering and in-between timing must hold.

Combining the above information, we define the constraint declaration set as
\begin{equation}
\mathcal{C}_{i,t}=\big\{\xi_{i,t},\ \zeta_{i,t},\ \alpha_{i,t},\ \omega_{i,t},\ \psi_{i,t},\ \chi_{i,t}\big\}.
\end{equation}
Based on $\mathcal{C}_{i,t}$ arm $i$ can clarify its needs for the current action and then packages together with action $a_{i,t}$ into a structured message $m_{i,t}=\langle a_{i,t},\mathcal{C}_{i,t}\rangle$ and reports it to $\mathcal{D}$ for avoiding collisions.
In our proposed scheme, the agent message is carried as structured key-value pairs rather than the traditional NL-based agent dialogue strategy. The DT $\mathcal{D}$ can thus obtain information that is stripped of the reporter's language ability.
Further accounting for the limited communication resources, we define $\sigma_{i,t}\in\{0,1\}$ as an indicator to account for whether this report takes place in the time slot $t$.
Here $\sigma_{i,t}=1$ means that arm $i$ has one uplink or downlink interaction with $\mathcal{D}$ at $t$ and $\sigma_{i,t}=0$ means no interaction. Accordingly, the set of agents that report structured information at time $t$ is denoted as $\mathcal{U}_t=\{i\mid\sigma_{i,t}=1\}$.

To carry out reporting, we consider a shared uplink. The end effector of arm $i$ is located at $q_i\in\mathbb{R}^3$ and the DT $\mathcal{D}$ is deployed at a fixed position $q_{\mathcal{D}}\in\mathbb{R}^3$. The uplink channel gain between the two attenuates with distance according to path loss, given by $h_i=h_0\,\|q_i-q_{\mathcal{D}}\|^{-\mu}$, where $h_0$ is the reference gain and $\mu>0$ is the path-loss exponent. With transmit power $P_i$, link bandwidth $B$, and noise power spectral density $N_0$, the achievable uplink rate from arm $i$ to $\mathcal{D}$ is
\begin{equation}
v_{i,t}=B\log_2\!\Big(1+\frac{P_i\,h_i}{N_0\,B}\Big).
\end{equation}
The per-message delay needed to report one structured message $m_{i,t}$ can be expressed as
\begin{equation}
\delta_{i,t}(q_i,|m_{i,t}|,P_i,B)=\frac{|m_{i,t}|}{B\log_2\!\Big(1+\dfrac{P_i\,h_0\,\|q_i-q_{\mathcal{D}}\|^{-\mu}}{N_0\,B}\Big)},
\end{equation}
where $|m_{i,t}|$ is the bit length. All reports share the same uplink to $\mathcal{D}$. The total delay of coordination communication in this slot is therefore the sum of the delays of all reporters in this step, given by
\begin{equation}
D_t=\sum_{i\in\mathcal{U}_t}\delta_{i,t}.
\end{equation}

\subsection{Coordination and Execution at the Digital Twin}
\label{subsec:resolve}

After aggregating the messages $\{m_{i,t}\}$ of all acting arms, the DT $\mathcal{D}$ does not need to understand the task semantics. It relies only on the declared constraints $\{\mathcal{C}_{i,t}\}$ and the current logical state set $W_t$ to act. From the acting set $\mathcal{N}_t^{\mathrm{act}}=\{i\mid a_{i,t}\neq a_0\}$, it selects by rules a maximal consistent arm set $\mathcal{A}_t^\star$ whose actions do not conflict with each other, so as to avoid conflicts. To characterize consistency, we first consider any acting-arm subset $\mathcal{A}\subseteq\mathcal{N}_t^{\mathrm{act}}$, each member $i$ of which carries this step's intended action $a_{i,t}$. The action of arm $i$ should be consistent with the actions of the other members of the set $\mathcal{A}$ and with the current state $W_t$. This holds when it simultaneously passes the three checks of mutual exclusion, synchronization, and dependency. We next give the consistency subconditions for these three coordination relations separately, and then conjoin them into a consistency criterion $\eta(i\mid\mathcal{A},W_t)\in\{0,1\}$.

\textbf{Mutual-exclusion consistency.} Arm $i$ contends for no exclusive element with any other member of $\mathcal{A}$, given by
\begin{equation}
\eta_{\xi}(i\mid\mathcal{A})=\mathds{1}\big[\,\forall j\in\mathcal{A}\setminus\{i\}:\ \xi_{i,t}\cap\xi_{j,t}=\varnothing\,\big].
\end{equation}

\textbf{Synchronization consistency.} The members of the joint group $\zeta_{i,t}$ to which arm $i$ belongs all fall within $\mathcal{A}$ and name the same group as each other, given by
\begin{equation}
\eta_{\zeta}(i\mid\mathcal{A})=\mathds{1}\big[\,\zeta_{i,t}\subseteq\mathcal{A}\ \wedge\ \forall m\in\zeta_{i,t}:\ \zeta_{m,t}=\zeta_{i,t}\,\big].
\end{equation}

\textbf{Dependency consistency.} The instantaneous prerequisite states needed by arm $i$ are produced in this step by some action in $\mathcal{A}$ or are already true in $W_t$. Likewise, the persistent states it needs are held in this step by some action in $\mathcal{A}$ or are already true in $W_t$, given by
\begin{equation}
\begin{aligned}
\eta_{\omega}(i\mid\mathcal{A},W_t)=\mathds{1}\big[\,\alpha_{i,t}\subseteq\textstyle\bigcup_{j\in\mathcal{A}}(\psi_{j,t}\cup \chi_{j,t})\\
\cup W_t\ \wedge\ \omega_{i,t}\subseteq\textstyle\bigcup_{j\in\mathcal{A}}\chi_{j,t}\cup W_t\,\big].
\end{aligned}
\end{equation}

The action of arm $i$ is consistent with $\mathcal{A}$ only when the three checks pass simultaneously, so the consistency criterion is the conjunction of the three, given by
\begin{equation}
\eta(i\mid\mathcal{A},W_t)=\eta_{\xi}(i\mid\mathcal{A})\ \wedge\ \eta_{\zeta}(i\mid\mathcal{A})\ \wedge\ \eta_{\omega}(i\mid\mathcal{A},W_t),
\end{equation}
which takes the value $1$ if and only if the mutual-exclusion, synchronization, and dependency declarations of arm $i$ are satisfied at the same time.
Accordingly, the executable set optimization method is given by
\begin{align}
\mathcal{A}_t^\star&=\operatorname*{arg\,max}_{\mathcal{A}\subseteq\mathcal{N}_t^{\mathrm{act}}}\ |\mathcal{A}|,\\
\label{eq:objectiveAction} 
&\text{s.t.}\quad \eta(i\mid\mathcal{A},W_t)=1,\ \forall i\in\mathcal{A}.
\end{align}
The subset of maximal cardinality under the premise that every member of the set satisfies its own declared constraints. 
Consider an agent arm $i\notin\mathcal{A}_t^\star$ whose action is vetoed by the DT during optimziation of $\mathcal{D}$. 
To this agent, $\mathcal{D}$ sends back an short instruction $d_{i,t}\in\{d_0,d_1\}$, where $d_0$ means yield and stop and $d_1$ means wait. The vetoed party backs off autonomously while $\mathcal{D}$ does not specify a replacement action, so that the arm generates an action again at the next step. The admitted set is realized as a physical advance through a shared execution map $\Phi$, given by $s_{t+1}=\Phi(s_t,\{a_{i,t}\}_{i\in\mathcal{A}_t^\star})$, and this map is consistent for all arms.

\subsection{Problem Formulation}
\label{subsec:problem}

We formalize an optimization problem whose goal is to maximize the expected cumulative task-objective reward under the communication constraint, which is formulated as
\begin{align}
\max_{\pi^c,\,\mathcal{U}_t}\quad & \mathbb{E}\!\left[\sum_{t=0}^{T}\gamma^t R_t\right]\label{eq:objective}
\\
\text{s.t.}\quad & D_t\le \bar D,\ \forall t,\tag{\theequation a}\label{eq:c1}\\
& \eta(i\mid\mathcal{A}_t^\star,W_t)=1,\ \forall i\in\mathcal{A}_t^\star,\ \forall t,\tag{\theequation b}\label{eq:c2}
\end{align}
where $R_t=\sum_{e\in\mathcal{E}^\star}(\varsigma_e(s_{t+1})-\varsigma_e(s_t))$ is the per-step reward.
The decision variable is the communication-selection policy $\pi^c$, while the per-arm action and constraint declaration $(a_{i,t},\mathcal{C}_{i,t})$ are produced locally by each $M_i$ and are treated as given rather than optimized.
The policy $\pi^c$ decides the per-step reporting indicators $\{\sigma_{i,t}\}$ and thereby decides the reporting set $\mathcal{U}_t$. Here $\eta(\cdot)$ is the declaration-satisfaction criterion defined above, $\gamma\in(0,1]$ is the discount on future-action reward, $T$ is the time horizon, and $\bar D$ is the per-step deadline of coordination communication.
(\ref{eq:objective}a) is the per-step coordination-delay constraint, which requires that the total delay of all reports of this step on the shared uplink not exceed the deadline $\bar D$. 
This constraint captures the real-time requirement of a constrained link under heterogeneous deployment.
(\ref{eq:objective}b) is the consistency constraint, which captures the coordination relations that physically cannot be violated simultaneously and guarantees that the action set has no explicit conflict.


Directly solving the above problem faces three structural challenges. First, heterogeneity-induced declaration noise. Since $\mathcal{C}_{i,t}$ is generated by $\pi_i$ through $M_i$, weak LLMs may omit or misstate declarations. This may cause local violations of (\ref{eq:objective}b). The coordination mechanism should therefore tolerate individual declaration errors, rather than being bottlenecked by the weakest arm. Second, cross-atomic-task coupling creates cascading conflicts. Exclusive, grouped, and dependent constraints are coupled through shared actions and states. 
The incomplete grouped action may then remove a prerequisite state for a dependent action.
Thus, conflicts propagate over the constraint graph. 
Third, partial observability prevents global planning. 
Since each arm only observes through $\Pi_i$, neither any arm nor $\mathcal{D}$ has complete access to $s_t$. 
Task ordering must instead emerge from local decisions and the reachable topology. These challenges motivate a lightweight conflict-avoidance mechanism that is robust to declaration noise, iteratively convergent, and able to guarantee finite-step progress without centralized planning.

\section{Proposed Method}
\label{sec:method}



\subsection{Unified Lightweight Orchestrator}
\label{subsec:orchestrator}

To solve the optimization problem formulated in (\ref{eq:objective}), we unify the mutual-exclusion, synchronization, and dependency atomic tasks into a single orchestrator $\mathcal{R}$ of four typed rules that avoids declaration conflicts.

We first define the mutual-exclusion rule, $T_{\xi}$, which handles the scenario in which several admitted actions contend for the same exclusive resource. Among the actions that contend for the same element of $\xi$ it keeps only the one with the highest yielding priority and removes the rest. This enforces the constraint that two actions cannot occupy the same resource at the same time. It is defined as
\begin{equation}
\begin{aligned}
T_{\xi}(\mathcal{A})&=\mathcal{A}\setminus\big\{i\in\mathcal{A}\mid\ \exists\,\\
&r\in\xi_{i,t},\ \exists\,j\in\mathcal{A}\setminus\{i\},\ r\in\xi_{j,t},\ \rho(a_{j,t})\succ\rho(a_{i,t})\big\},
\label{eq:Txi}
\end{aligned}
\end{equation}
where $\rho(a)\in \{0,1\}\times\mathbb{Z}_{\le 0}\times\mathbb{Z}_{\le 0}$ is the yielding priority which is defined as
\begin{equation}
\rho(a)=\Big(\,\mathds{1}[\,p=p_e^\star\,],\ -\,\big|\mathrm{pos}(p)-\mathrm{pos}(p_e^\star)\big|,\ -\,\kappa(i)\,\Big).
\label{eq:priority}
\end{equation}
The first component $\mathds{1}[\,p=p_e^\star\,]$ is the terminal bit that marks whether the drop slot $p$ is exactly the target resource of object $e$. An action that reaches the final state directly is more worth keeping than one that only relays. The second component is the negated remaining step count $\big|\mathrm{pos}(p)-\mathrm{pos}(p_e^\star)\big|$, that is, the distance of the drop slot $p$ to the target resource $p_e^\star$ along the ordinal $\mathrm{pos}$, where a smaller distance is more worth keeping. 
In the third component, quantity $\kappa(i)$ is the fixed index of the arm, used only to break ties when the first two components are equal. 
This priority is built solely from the local features of a single action and needs no inter-arm negotiation, which keeps $\mathcal{D}$ lightweight.
For two actions that contend for the same resource, the components are compared from the first to the third, and on the first component where the values differ, the larger value has the higher priority. 
Accordingly, $T_{\xi}$ keeps only the highest-priority contender and removes the others.

Similarly, we can define the synchronization rule $T_{\zeta}$ that handles the scenario in which a group of actions that must execute simultaneously is not complete.
The group is kept only when its members name the same group at this step, and all of them still remain in $\mathcal{A}$, and otherwise the whole group is removed. This rules out half-finished execution. It is defined as
\begin{equation}
\begin{aligned}
T_{\zeta}(\mathcal{A})=\mathcal{A}\setminus\big\{i\in\mathcal{A}\mid\ &\zeta_{i,t}=G\neq\varnothing,\\
&\ \neg\big(\forall m\in G:\ m\in\mathcal{A}\wedge\zeta_{m,t}=G\big)\big\}.
\end{aligned}
\label{eq:Tzeta}
\end{equation}

We define the instantaneous-dependency rule $T_{\alpha}$ that handles the scenario in which the precondition that an action requires is produced by no one at this step. 
For this, we first define the production closure and the holding closure of the admitted set, which are
\begin{equation}
\bar\psi(\mathcal{A})=\bigcup_{j\in\mathcal{A}}(\psi_{j,t}\cup\chi_{j,t}),\qquad \bar\chi(\mathcal{A})=\bigcup_{j\in\mathcal{A}}\chi_{j,t}.
\label{eq:closures}
\end{equation}
Specifically, all states produced and held by the admitted set at this step. The rule removes the actions whose precondition is neither in the production closure nor already true in $W_t$ which is defined as
\begin{equation}
T_{\alpha}(\mathcal{A})=\mathcal{A}\setminus\big\{i\in\mathcal{A}\mid\ \alpha_{i,t}\not\subseteq\bar\psi(\mathcal{A})\cup W_t\big\}.
\label{eq:Talpha}
\end{equation}

The persistent-dependency rule $T_{\omega}$ handles the scenario in which a state that an action requires to be held at this step is held by no one. It removes the actions whose required state is neither in the holding closure nor already true, which enforces that the action must execute while some state is being held. It is defined as
\begin{equation}
T_{\omega}(\mathcal{A})=\mathcal{A}\setminus\big\{i\in\mathcal{A}\mid\ \omega_{i,t}\not\subseteq\bar\chi(\mathcal{A})\cup W_t\big\}.
\label{eq:Tomega}
\end{equation}

Building on these four rules, the DT $\mathcal{D}$ composes them through the unified orchestrator $\mathcal{R}$ and applies the composition to the candidate set repeatedly until it converges to a self-consistent executable set. The output action set then realizes the goal of conflict avoidance, that is, it selects from all intended actions of this step a subset that are mutually compatible and can execute together. This executable set is given by
\begin{equation}
\mathcal{A}_t^\star=\mathcal{R}(\{m_{i,t}\})=\mathrm{iter}\big(T_{\xi}\circ T_{\zeta}\circ T_{\alpha}\circ T_{\omega}\big)(\mathcal{N}_t^{\mathrm{act}}),
\label{eq:orchestrator}
\end{equation}
where $\mathrm{iter}(\cdot)$ denotes repeated application of the composite rule in the parentheses until the result no longer changes, and each rule acts on the candidate set $\mathcal{A}$ and removes the offenders of its type.

\begin{remark}
Note that at the fixed point, no rule fires, so every retained arm is consistent and the output is always feasible for (\ref{eq:objectiveAction}). 
In the single-resource atomic tasks considered here, the mutual-exclusion conflicts form disjoint cliques while incomplete synchronization groups and unmet dependencies are removed deterministically, so the iteration keeps one arm per contended resource plus all self-supporting arms and thus attains the maximum cardinality; in the general coupled case it remains a linear-time feasible solver with the priority $\rho$ as a deterministic tie-break.
\end{remark}

After $\mathcal{A}_t^\star$ is obtained, $\mathcal{D}$ returns to each vetoed arm $i\in\mathcal{N}_t^{\mathrm{act}}\setminus\mathcal{A}_t^\star$ a very short downlink instruction $d_{i,t}\in\{d_0,d_1\}$, where $d_0$ is the yield-and-stop instruction and $d_1$ is the wait instruction. The actual value is determined by which rule removed the action. If $i$ yields in a contention by priority $\rho$ under the mutual-exclusion rule $T_\xi$, then $d_0$ is returned, because the contention at this step is irrecoverable. The arm then abandons the action and backs off in place or reselects. If $i$ is removed because its group is incomplete under synchronization $T_\zeta$, or because its precondition is not yet satisfied under the dependency rules $T_\alpha$ or $T_\omega$, then $d_1$ is returned. In this case the missing condition may be met in a later slot, and the arm keeps its current intention and waits. Neither instruction prescribes a replacement action for the arm. $\mathcal{D}$ only states that this step is infeasible and of which nature, whether to yield or to wait, while the back-off and the reselection are still carried out by the arm itself.

Since each of the four rules only removes elements from $\mathcal{A}$, so that $T(\mathcal{A})\subseteq\mathcal{A}$, their composition is monotonically shrinking and its finite-step convergence is guaranteed by the following proposition.

\begin{proposition}[Finite-step convergence of the orchestrator]
\label{prop:convergence}
The composition $\Theta=T_{\xi}\circ T_{\zeta}\circ T_{\alpha}\circ T_{\omega}$ satisfies $\Theta(\mathcal{A})\subseteq\mathcal{A}$, so the iteration that starts from $\mathcal{N}_t^{\mathrm{act}}$ must converge in a finite number of steps to a unique stable executable set, and it is reached in at most $|\mathcal{N}_t^{\mathrm{act}}|$ rounds.
\end{proposition}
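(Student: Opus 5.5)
The plan is a monotone-descent argument on the finite lattice of subsets of $\mathcal{N}_t^{\mathrm{act}}$. The first step is to verify that each of $T_\xi$, $T_\zeta$, $T_\alpha$, $T_\omega$ satisfies $T(\mathcal{A})\subseteq\mathcal{A}$. This can be read off directly from \eqref{eq:Txi}, \eqref{eq:Tzeta}, \eqref{eq:Talpha} and \eqref{eq:Tomega}, since each has the form $\mathcal{A}\setminus S(\mathcal{A})$. Composing gives $\Theta(\mathcal{A})\subseteq\mathcal{A}$.

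Define the iterates $\mathcal{A}^{(0)}=\mathcal{N}_t^{\mathrm{act}}$ and $\mathcal{A}^{(k+1)}=\Theta(\mathcal{A}^{(k)})$. These form a nested chain $\mathcal{A}^{(0)}\supseteq\mathcal{A}^{(1)}\supseteq\cdots$. The integer potential $|\mathcal{A}^{(k)}|$ is nonincreasing and bounded below by $0$. Whenever $\mathcal{A}^{(k+1)}\neq\mathcal{A}^{(k)}$, this potential drops by at least one. Once $\mathcal{A}^{(k+1)}=\mathcal{A}^{(k)}$, the set is a fixed point of $\Theta$, and every later iterate equals it. So there are at most $|\mathcal{N}_t^{\mathrm{act}}|$ strict decreases, and the stopping criterion of $\mathrm{iter}(\cdot)$ triggers within $|\mathcal{N}_t^{\mathrm{act}}|$ rounds of removal. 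If the final confirming application is counted, the bound becomes $|\mathcal{N}_t^{\mathrm{act}}|+1$ evaluations, and I would state this counting convention explicitly. In the extreme case $\mathcal{A}^{(k)}=\varnothing$, the empty set is trivially fixed, because every rule maps $\varnothing$ to $\varnothing$.

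Uniqueness is the step I expect to need care. The claim should be read as uniqueness of the limit of this particular deterministic iteration, not uniqueness of the fixed point of $\Theta$. Different fixed points can exist; for example, a subset containing only the lower-priority contender is also stable. The argument I would give is as follows. Each $T$ is a deterministic function of $\mathcal{A}$, given the fixed declarations $\{\mathcal{C}_{i,t}\}$, the state $W_t$, and the priority $\rho$. Priority ties are broken strictly by $\kappa(i)$, so $\succ$ is a strict total order on actions and $T_\xi$ is well defined. Hence the sequence $\mathcal{A}^{(k)}$ is uniquely determined, and so is its limit.

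Finally, "stable executable set" can be tied to the Remark. At the fixed point no rule removes anything, so for every $i\in\mathcal{A}^\star_t$ the conditions $\eta_\xi$, $\eta_\zeta$ and $\eta_\omega$ hold. For $\eta_\xi$, I would note that after $T_\xi$ fires, any two surviving contenders would contradict the removal of the lower-priority one. This is the only subtle point, because $T_\xi$ evaluates priorities within the current set, which is why checking it at the fixed point is needed.
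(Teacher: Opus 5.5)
Your argument is the paper's: each rule has the form $\mathcal{A}\setminus S(\mathcal{A})$, so the iterates form a nested chain in the finite power set of $\mathcal{N}_t^{\mathrm{act}}$ that strictly shrinks until it stabilizes, which gives at most $|\mathcal{N}_t^{\mathrm{act}}|$ removing rounds. The one divergence is uniqueness: the paper says the limit is the \emph{greatest} fixed point of $\Theta$, but that needs $\Theta$ to be order-preserving and $T_\xi$ is not (with two arms contending for one resource, both singletons are fixed points and the pair is not, so no greatest fixed point exists), so your reading of uniqueness as determinism of the iteration is the correct one.
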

\begin{proof}
    See Appendix~\ref{app:convergence}.
\end{proof}




The closure and completeness of the three atomic tasks over the coordination space are characterized by the following proposition.

\begin{proposition}[Closure and completeness of the three atomic tasks]
\label{prop:closure}
In a discrete-slot scenario where actions are constrained by shared resources and logical states, and where coordination constraints never force any arm to act, so that both arms may remain idle, any coordination constraint between two interacting actions is logically equivalent to a conjunction of mutual-exclusion and dependency atomic tasks. Synchronization is precisely the symmetric conjunction of two oppositely directed dependencies. Therefore, the atomic-task set mutual exclusion, synchronization, dependency, grounded by the four rules $T_\xi,T_\zeta,T_\alpha,T_\omega$, is closed under conjunction, $n$-ary extension, and interval extension, and spans all coordination constraints under this model. These three relations are pairwise distinct: they require disjoint, simultaneous, and ordered execution, respectively, and none is a special case of another~\cite{vanderAalst2003WorkflowPatterns}.
\end{proposition}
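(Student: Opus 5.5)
The plan is to model a pairwise coordination constraint between two actions $a$ (of arm $i$) and $b$ (of arm $j$) in one slot as a Boolean relation on the execution indicators $(x_a,x_b)\in\{0,1\}^2$, where $x_a=1$ means $a$ is admitted. Because constraints never force an arm to act, every admissible relation contains $(0,0)$. The set of such constraints is therefore the family of subsets $S\subseteq\{0,1\}^2$ with $(0,0)\in S$, and there are eight of them. Each is determined by which of the three points $(1,0)$, $(0,1)$, $(1,1)$ it forbids. This gives three elementary forbidding clauses:
\begin{itemize}
\item forbidding $(1,1)$ is the clause $\neg x_a\vee\neg x_b$, i.e.\ mutual exclusion over a shared element of $\xi$;
\item forbidding $(1,0)$ is $x_a\Rightarrow x_b$, i.e.\ a dependency of $a$ on a state produced or held by $b$, encoded via $\alpha$/$\psi$ or $\omega$/$\chi$;
\item forbidding $(0,1)$ is the reverse dependency.
\end{itemize}
Every admissible $S$ is the conjunction of the clauses for its forbidden points. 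This is CNF over two variables with only these clause types: the clause $x_a\vee x_b$ is excluded precisely because it would forbid $(0,0)$. That establishes the equivalence claim.

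Next I would show that synchronization is the conjunction of the two opposite dependencies. The conjunction $(x_a\Rightarrow x_b)\wedge(x_b\Rightarrow x_a)$ is $x_a\Leftrightarrow x_b$, which is exactly the admission semantics of $T_\zeta$ for $G=\{i,j\}$. Forbidding all three points gives $x_a=x_b=0$, which is the degenerate conjunction of all three clauses, so the enumeration is complete.

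For closure I would treat the three operations in turn.
\begin{itemize}
\item \textbf{Conjunction.} A conjunction of the three clause types is again such a conjunction, and the orchestrator enforces conjunctions because $\eta$ is itself a conjunction, which each rule in $\Theta$ checks independently.
\item \textbf{$n$-ary extension.} An $n$-ary constraint containing the all-zero point is handled by replacing pairwise $\xi$ overlaps with set intersection over an arbitrary number of claimants (the clique structure noted in the Remark), and by taking the roster $G$ of arbitrary size in $T_\zeta$.
\item \textbf{Interval extension.} Ordering with in-between timing is expressed through the persistent-dependency pair $\omega/\chi$, since holding a state across a slot sequence is a per-slot conjunction of dependencies.
\end{itemize}

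Pairwise distinctness follows by exhibiting a point that each relation admits and another forbids. Mutual exclusion admits $(1,0)$ and $(0,1)$ but forbids $(1,1)$. Synchronization admits $(1,1)$ and forbids $(1,0)$. Dependency admits $(0,1)$ but forbids $(1,0)$, so it is asymmetric, whereas the other two are symmetric.

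I expect the main obstacle to be the $n$-ary extension. A general $n$-ary Boolean relation containing $\mathbf 0$ need not factor into binary clauses; for example, "at most one of three" does, but "not all three" does not. So I would state the closure claim only for constraints that decompose into pairwise clauses. Alternatively, I would argue that the declarations admit $n$-ary native forms: $\xi$ overlap gives exclusive ownership, $G$ gives all-or-none, and a set-valued $\alpha$ gives a disjunction over producers, since $\bar\psi$ is a union. I would then check which Horn-like fragment these native forms cover, and present the completeness claim relative to that fragment.
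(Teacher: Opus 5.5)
The paper does not prove this proposition. It is asserted with only a citation to the workflow-patterns literature, and the appendix proves Proposition~\ref{prop:convergence} alone. So there is no paper route to compare against, and your formalization already goes beyond what the paper provides.

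\textbf{The pairwise part is sound.} Modelling a pairwise constraint as $S\subseteq\{0,1\}^2$ with $(0,0)\in S$, and writing each of the eight such relations as the conjunction of the two-literal clauses that kill its forbidden points, is exactly what makes the first two sentences of the statement precise. Also, $T_\zeta$ with $G=\{i,j\}$, when both arms name $G$, does enforce $x_a\Leftrightarrow x_b$. Two tightenings are needed.
\begin{itemize}
\item The clause $x_a\Rightarrow x_b$ is realized by $\alpha/\psi$ (or $\omega/\chi$) only when the required state is outside $W_t$ and $b$ is its sole producer in the slot.
\item The rules do not check ``independently'' within one pass: $T_\xi$ may delete a supporter after $T_\alpha$ has accepted its dependent. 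Argue at the fixed point instead. $\Theta(\mathcal{A})=\mathcal{A}$ forces equality throughout $\Theta(\mathcal{A})\subseteq T_\zeta(T_\alpha(T_\omega(\mathcal{A})))\subseteq T_\alpha(T_\omega(\mathcal{A}))\subseteq T_\omega(\mathcal{A})\subseteq\mathcal{A}$. Hence each rule fixes $\mathcal{A}$, and every declared clause holds at the output.
\end{itemize}

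\textbf{Your $n$-ary worry is correct, and it bites harder than you say.} There are two readings of ``atomic task'', and the claim fails under both.
\begin{itemize}
\item \emph{Pairwise reading.} The paper's own $T_\alpha$ already leaves the span of the pairwise relations. Because $\bar\psi$ is a union, two producers $b,c$ of a state needed by $a$ enforce $x_a\Rightarrow(x_b\vee x_c)$, which forbids only $(1,0,0)$. Every two-literal clause on distinct variables forbids two points of $\{0,1\}^3$, so any conjunction of such clauses that forbids $(1,0,0)$ also forbids a second point.
\item \emph{Native $n$-ary reading.} Here your second option is the right repair, and it can be carried out exactly. The declarations define precisely the conjunctions of clauses of two shapes: $\neg x_i\vee\neg x_j$ (a fresh element of $\xi$ shared by $i$ and $j$), and $\neg x_i\vee\bigvee_{j\in P}x_j$ with $P$ possibly empty (a fresh state in $\alpha_i$ produced exactly by $P$). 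Synchronization is derived. This class is neither Horn nor dual-Horn. The same point-counting shows it cannot express ``not all three'' (a panel of capacity two) or $(x_a\wedge x_b)\Rightarrow x_c$.
\end{itemize}
So for three or more arms the spanning claim is false under either reading, and no proof of it as stated can exist. Completeness must be stated relative to this fragment.

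Your interval-extension line is likewise only a sketch. The per-slot checks read $W_t$ only positively, so any cross-slot ordering must be carried by how $\Phi$ writes and clears produced and held states in $W_{t+1}$. The paper never specifies this, so make that update rule an explicit assumption.

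\textbf{Your witnesses do not support ``none is a special case of another''.} In your own model $\{(0,0),(1,1)\}\subsetneq\{(0,0),(0,1),(1,1)\}$, so synchronization entails dependency, as the statement's second sentence itself says. Under the entailment reading, synchronization \emph{is} a special case of dependency. The points you exhibit show only that the three relations are distinct, and stay distinct after swapping the two arms; note that it is $(0,1)$, not $(1,0)$, that separates dependency from synchronization. State that this weaker claim is what you prove. Also say that mutual exclusion and dependency alone generate the class, with synchronization a derived rather than atomic relation.
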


\subsection{Learned Communication-Selection Layer}
\label{subsec:comm}

The coordination layer above already achieves reliable cooperation among the arms without any training. However, every arm still has to upload its action and constraint declaration to $\mathcal{D}$ at every step, and these reports accumulate a non-negligible per-step latency $D_t$ on a heterogeneous and constrained shared uplink. To further reduce the transmission overhead, this subsection solves the communication-selection policy $\pi^c$ of the formulated problem. Since the latency deadline (\ref{eq:objective}a) is coupled step by step in time and its decision observation $z_{i,t}$ is only a partially observable proxy, it is hard to optimize $\{\sigma_{i,t}\}$ directly. To address this we treat it as a sequential decision problem and model it as a constrained partially observable Markov decision process (C-POMDP). 
Its state is the decision observation $z_{i,t}$, its action is the report indicator $\sigma_{i,t}$, and its instantaneous reward is the team-level shaped signal $r_t$. It maximizes $\mathbb{E}[\sum_t\gamma^t r_t]$ under the expectation form $\mathbb{E}[D_t]\le\bar D$ of (\ref{eq:objective}a), where $D_t=\sum_{i\in\mathcal{U}_t}\delta_{i,t}$ is the total report latency of this step and $\mathcal{U}_t=\{i\mid\sigma_{i,t}=1\}$ is the report set of this step. This layer is strictly decoupled from the coordination layer. The coordination layer, namely the lightweight $\mathcal{D}$ together with the orchestrator, stays training-free and unchanged. The policy $\pi^c$ only decides whether to report, while the reports that arrive are still arbitrated by the same orchestrator $\mathcal{R}$.

Then, we define the C-POMDP of this communication-selection subproblem element by element. 
The agent on each active arm $i$ is the decision unit that decides whether to report. 
The observation $z_{i,t}$ is built only from quantities that arm $i$ can read out locally, and it contains neither the information of other arms nor any task semantics. It is
\begin{equation}
z_{i,t}=\big(b_{i,t},\,\|q_i-q_{\mathcal{D}}\|,\,h_i,\,\Delta_{i,t},\,\tau_{i,t}\big),
\label{eq:obs}
\end{equation}
where $b_{i,t}\in\{0,1\}$ is the conflict-possibility proxy that indicates whether the resource that $i$ declares to occupy at this step falls on a shared resource $\mathcal{P}^{\mathrm{sh}}$, that is, whether the action touches a position that another arm may contend for. This judgment depends only on the static shared-resource structure $\mathcal{P}^{\mathrm{sh}}$ and on the own occupancy declaration $\xi_{i,t}$ of $i$, and it needs no information of any other arm. The end-effector distance to the twin $\|q_i-q_{\mathcal{D}}\|$ and the channel gain $h_i=h_0\|q_i-q_{\mathcal{D}}\|^{-\mu}$ jointly characterize the local link state of arm $i$. The quantity $\Delta_{i,t}\in\{0,1\}$ is the intention novelty, namely whether the action of this step is the same as the previous step. The quantity $\tau_{i,t}\in\mathbb{Z}_{\ge 0}$ is the number of consecutive rounds that arm $i$ has not reported since its last report. The action $\sigma_{i,t}\in\{0,1\}$ is the report indicator. At each step every agent first produces an intended action $a_{i,t}$ on its own, and then $\pi^c$ outputs $\sigma_{i,t}$ from $z_{i,t}$. Let the report set be $\mathcal{U}_t=\{i\mid\sigma_{i,t}=1\}$. The orchestrator then computes the consistent set only over $\mathcal{U}_t$ as $\mathcal{A}_t^\star=\mathcal{R}(\{m_{i,t}\}_{i\in\mathcal{U}_t})$. The action of an arm $i\notin\mathcal{U}_t$ that does not report does not enter conflict avoidance and lands directly through the execution map $\Phi$ under its own intention. It bypasses the veto, so that its conflict is only guarded by the physical gate. The communication cost is the total latency of this step on the shared uplink, $D_t=\sum_{i\in\mathcal{U}_t}\delta_{i,t}$, which under an equal channel is proportional to the report count $|\mathcal{U}_t|$.

The reward is the team-level per-step signal
\begin{equation}
r_t=R_t-\lambda\,D_t-\beta\,f_t,
\label{eq:reward}
\end{equation}
where $R_t=\sum_{e\in\mathcal{E}^\star}\big(\varsigma_e(s_{t+1})-\varsigma_e(s_t)\big)\ge0$ is the task-goal progress of this step, namely the same per-step reward. Here $\lambda\ge0$ is the latency dual variable, $D_t=\sum_{i\in\mathcal{U}_t}\delta_{i,t}$ is the total report latency of this step, and $f_t$ is the number of physical failures caused by unreported actions at this step. The action of an arm $i\notin\mathcal{U}_t$ that does not report bypasses the orchestration and lands directly through $\Phi$, and whether it is compatible with the concurrent actions of this step determines success or failure. Let the active set that is actually executed concurrently at this step be the union of the admitted arms and the unreported arms that land directly, $\mathcal{A}_t^{\mathrm{exe}}=\mathcal{A}_t^\star\cup(\mathcal{N}_t^{\mathrm{act}}\setminus\mathcal{U}_t)$. 
$f_t=\big|\{\,i\in\mathcal{N}_t^{\mathrm{act}}\setminus\mathcal{U}_t\ \mid\ \eta(i\mid\mathcal{A}_t^{\mathrm{exe}},W_t)=0\,\}\big|$ is the number of actions that land without orchestration yet violate the consistency criterion $\eta$ defined in Section~\ref{sec:system_model}. 
Such incompatibility shows up physically as a collision or a gate failure and is counted by the execution map $\Phi$ at landing. 
Here $\beta\,f_t$ is a reward shaping based on physical execution feedback rather than an injection of the coordination rule. The task progress $R_t$ is sparse, since most steps are zero. The gain of reporting or not reporting is reflected only indirectly through physical execution, so credit is hard to assign. In contrast, $f_t$, as the direct physical consequence of a missed conflict, supplies the sparse reward with a dense and observable signal. When $b_{i,t}{=}1$, not reporting causes a collision, $f_t$ rises, and the return drops, which drives the policy to report when a conflict is possible. When $b_{i,t}{=}0$, not reporting causes neither a collision nor a communication cost, which drives the policy to stay silent when it is safe.

The policy adopts a per-agent Bernoulli parameterization. 
A network $g_\theta$ maps the observation $z_{i,t}$ to a report probability $p_\theta(z_{i,t})=1/\big(1+e^{-g_\theta(z_{i,t})}\big)\in[0,1]$ which is given by
\begin{equation}
\pi^c_\theta(\sigma_{i,t}\mid z_{i,t})=p_\theta(z_{i,t})^{\,\sigma_{i,t}}\big(1-p_\theta(z_{i,t})\big)^{\,1-\sigma_{i,t}}.
\label{eq:policy}
\end{equation}
Here, the probability of reporting with $\sigma_{i,t}=1$ is $p_\theta(z_{i,t})$ and the probability of not reporting with $\sigma_{i,t}=0$ is its complement $1-p_\theta(z_{i,t})$. 
The value is given by a value network $V_\phi$ that estimates from the observation the expected discounted return of that state, $V_\phi(z_{i,t})\approx\mathbb{E}_{\pi^c_\theta}\big[\sum_{k\ge t}\gamma^{k-t}r_k\mid z_{i,t}\big]$. The advantage function is defined as
\begin{equation}
\hat A_t=\hat G_t-V_\phi(z_{i,t}),\qquad \hat G_t=\sum_{k\ge t}\gamma^{k-t}r_k,
\label{eq:advantage}
\end{equation}
where $\hat G_t$ is the Monte Carlo return from $t$ on and $\hat A_t$ measures the gain of the report decision of this step relative to the baseline $V_\phi$. Both are used for the policy improvement and the value fitting of the PPO solver below.

The action of this subproblem is per-agent binary, and the observation is low-dimensional, and after the shaping above, the reward is already densified.
We therefore use PPO-Lagrangian as the solver because its policy gradient naturally couples to the primal--dual ascent of the latency constraint, which is standard modern practice for constrained RL. 
Relaxing the per-step latency constraint (\ref{eq:objective}a) with a multiplier $\lambda\ge0$ yields the saddle-point objective as
\begin{equation}
\min_{\lambda\ge0}\ \max_{\theta}\ \mathbb{E}_{\pi^c_\theta}\!\Big[\sum_t\gamma^t\big(R_t-\beta\,f_t\big)-\lambda\big(\textstyle\sum_t D_t-T\bar D\big)\Big],
\label{eq:saddle}
\end{equation}
which is solved by alternating between its inner and outer levels. The inner level, which fixes $\lambda$
and improves the policy, ascends with PPO. It reuses the advantage $\hat A_t$ and the Monte Carlo return target $\hat G_t$ defined above, replacing single-step bootstrapping with the return-to-go so that the sparse terminal reward propagates back stably over a short horizon. With the probability ratio $\varrho_t(\theta)=\pi^c_\theta(\sigma_{i,t}\mid z_{i,t})/\pi^c_{\theta_{\mathrm{old}}}(\sigma_{i,t}\mid z_{i,t})$ it maximizes the clipped surrogate objective
\begin{equation}
\mathcal{L}^{\mathrm{clip}}(\theta)=\mathbb{E}_t\big[\min\big(\varrho_t(\theta)\,\hat A_t,\ \mathrm{clip}(\varrho_t(\theta),1-\epsilon,1+\epsilon)\,\hat A_t\big)\big],
\label{eq:clip}
\end{equation}
and the value network $V_\phi$ is fitted by $\min_\phi\mathbb{E}_t[(V_\phi(z_{i,t})-\hat G_t)^2]$. The outer level, namely the latency dual, performs a PID primal--dual ascent on $\lambda$. With the per-episode latency excess $\hat g=\frac1T\sum_t D_t-\bar D$ it updates $\lambda\leftarrow[\lambda+k_P\hat g+k_I\sum\hat g+k_D\Delta\hat g]_+$. The proportional term responds quickly, the integral term removes the steady-state excess, and the derivative term suppresses overshoot. The PID form is adapted from the transmission scheduling of goal-oriented semantic communication~\cite{Chen2025Goal}. 
\section{Experiments}
\label{sec:experiments}

For experimentation, we evaluate on the RoCo/MuJoCo physics simulator \cite{Mandi2023RoCo}. The evaluation covers three classes of multi-arm coordination, which correspond one-to-one to the three constraint operators of the orchestrator. 
They are mutual exclusion (e.g., confined-space sorting, three arms relay-sort objects through a shared panel, exclusion $T_{\xi}$), synchronization (move rope, barrier $T_{\zeta}$), and dependency (cabinet, hold-then-fetch, dependency $T_{\omega}$).
The team heterogeneity is induced solely by the capability gap of the per-arm $M_i$. From strong to weak the models are fixed as DeepSeek-R1-Distill-Qwen-1.5B \cite{DeepSeek2025DeepSeekR1} $>$ Qwen2.5-1.5B-Instruct $>$ Qwen2.5-0.5B-Instruct \cite{Qwen2024Technical}. 
Unless otherwise stated, the main experiments use a homogeneous default team that shares Qwen2.5-1.5B. For the heterogeneity-robustness analysis, we additionally form three team configurations with weak, mid, and strong heterogeneity, where a larger capability gap among the per-arm models indicates stronger heterogeneity.
For comparison, we consider four baselines, grouped into a dialogue-based group and a centralized group as follows.

\begin{itemize}
  \item \textbf{RoCo-NL} \cite{Mandi2023RoCo} is the representative dialogue-based paradigm, where multiple arms negotiate the division of labor through several rounds of natural-language (NL) dialogue.
  \item \textbf{AutoGen} \cite{Wu2023AutoGen} is a real-library implementation of the dialogue-based paradigm, where a group-chat manager orchestrates multi-round multi-agent dialogue.
  \item \textbf{Centralized-LLM (fused)} \cite{Liu2025COHERENT,Kannan2023SMART} is the representative centralized single-brain paradigm, where one central LLM aggregates the whole team's local observations and generates the joint action in one shot, with \emph{fused} denoting a compressed observation input.
  \item \textbf{Centralized-Classical} \cite{Gerkey2004Taxonomy} replaces the central LLM with a classical greedy task-allocation scheduler, representing an LLM-free centralized ceiling on coordination quality.
\end{itemize}

\subsection{Performance Evaluation}

\begin{table}[t]
\centering
\caption{Success rate and per-episode communication (homogeneous 1.5B).}
\label{Tab:MainSort}
\setlength{\tabcolsep}{4pt}
\small
\begin{tabular*}{\columnwidth}{@{\extracolsep{\fill}}lcccc@{}}
\toprule
Method & SR & Data Size (B/ep) & vs.\ ours \\
\midrule
\textbf{Proposed method} & 0.640 & \textbf{43.3} & \textbf{1.0$\times$} \\
AutoGen & 0.700 & 3{,}911 & 90$\times$ \\
RoCo-NL & 0.580 & 3{,}147 & 73$\times$ \\
Centralized-LLM (fused) & 0.667 & 112.7 & 2.6$\times$ \\
Centralized-Classical & 0.733 & 84.3 & 1.9$\times$ \\
\bottomrule
\end{tabular*}
\end{table}

Table~\ref{Tab:MainSort} reports the success rate (SR) and the per-episode communication overhead of the five methods during collaboration with $80$ round in total.
Success rate is the fraction of successful episodes over all evaluation episodes, where an episode succeeds if and only if all task-relevant objects meet the per-object criterion $\varsigma_e$ at termination.
From Table~\ref{Tab:MainSort}, we can see that the considered methods have a similar SR band, while the proposed method has the lowest communication, which is $73$ to $90\times$ lower than the dialogue methods and still $1.9$ to $2.6\times$ lower than the centralized methods. 
This shows that the proposed method matches the baselines' success rate band at the lowest communication overhead.
This is because the dialogue methods must transmit NL every round and the centralized methods must upload every arm's full local observation, whereas each arm of ours completes its semantic understanding locally and uploads only a compact constraint tuple compressible into fixed-width identifiers.

\begin{figure}[t]
\centering
\includegraphics[width=7.5cm]{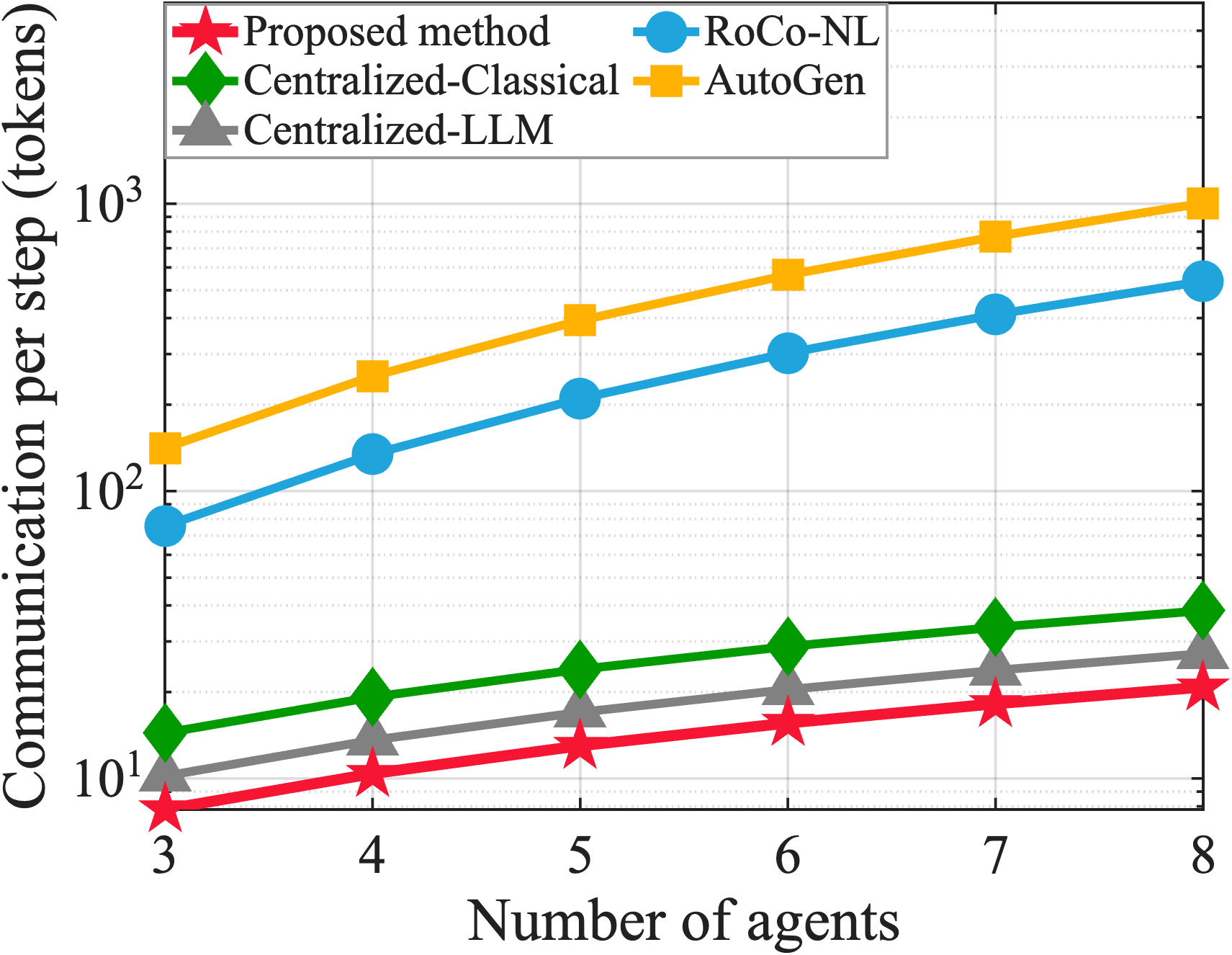}
\caption{Per-step communication vs.\ team size $n$.}
\label{fig_scalability}
\end{figure}

Fig.~\ref{fig_scalability} shows how the per-step communication changes as the team size $n$ varies. 
From Fig.~\ref{fig_scalability}, we can see that RoCo-NL and AutoGen inflate rapidly with $n$, while ours and the two centralized methods grow linearly, and ours stays the lowest throughout. This shows that the communication advantage of ours keeps widening as the team grows and is not reversed by scale. This is because the dialogue payload is proportional to the negotiation rounds times the number of participants and is therefore quadratic, whereas each arm of ours sends only one fixed-length constraint tuple per step so that the total grows linearly with $n$.

\begin{figure}[t]
\centering
\includegraphics[width=7.5cm]{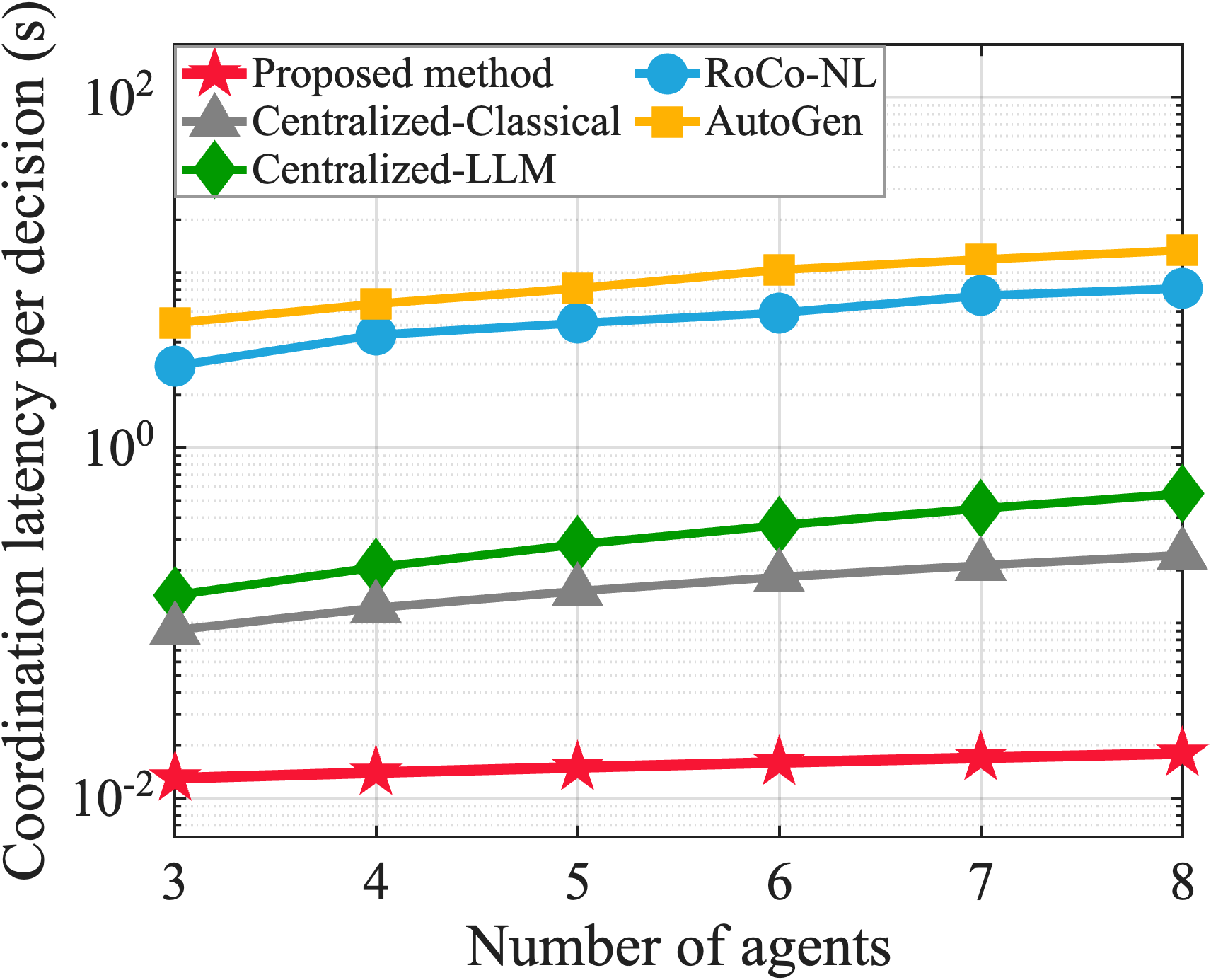}
\caption{End-to-end per-decision coordination latency vs.\ team size $n$.}
\label{fig_latency}
\end{figure}

Fig.~\ref{fig_latency} shows how the end-to-end per-decision latency changes as $n$ varies.
From Fig.~\ref{fig_latency}, we can see that the latency of the proposed method is independent of $n$ and stays a constant $13$ to $18$\,ms and the lowest throughout, while Centralized-Classical rises linearly to $0.24$\,s and Centralized-LLM to $0.55$\,s, and RoCo-NL and AutoGen rise to the order of seconds ($8.1$ and $13.4$\,s).
From Fig.~\ref{fig_latency}, we can also see that at $n=8$ the latency of the proposed method is only about $1/740$ of that of the slowest benchmark, so it alone sustains a real-time-level latency as the team scales.
This is because each arm of ours reads a bounded local context and scores in parallel, whereas the dialogue methods decode autoregressively, round by round, over a context that accumulates with $n$ during LLM inference, and the centralized methods must collect every observation before serial processing.

\begin{figure}[t]
\centering
\includegraphics[width=7.5cm]{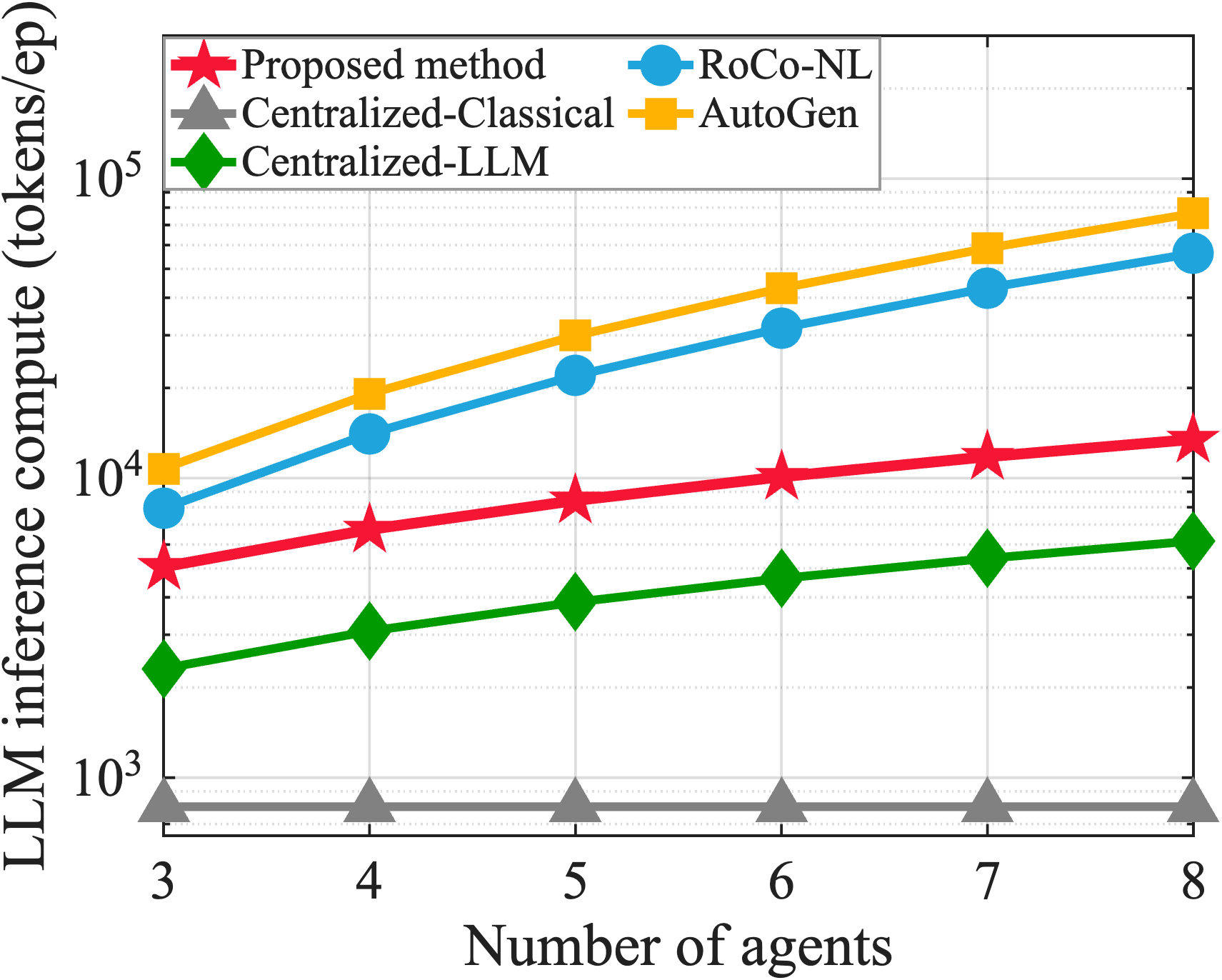}
\caption{Per-episode LLM inference compute vs.\ team size $n$.}
\label{fig_compute}
\end{figure}

Fig.~\ref{fig_compute} shows how the per-episode LLM inference compute (the total of prompt and completion tokens) changes as $n$ varies.
From Fig.~\ref{fig_compute}, we can see that the proposed method grows linearly and RoCo-NL and AutoGen inflate quadratically.
We can also see that at $n=8$, the dialogue methods require $4$ to $6$ times as much computation as the proposed method. 
This shows that the dialogue methods explode not only in communication overhead and latency but also in compute costs. 
This is because the total generated tokens of multi-round dialogue are proportional to the rounds times a growing dialogue context, whereas the total compute of ours is linear in $n$ and parallelizable across arms, so its compute grows while its latency stays flat (cf.\ Fig.~\ref{fig_latency}).

\begin{table}[t]
\centering
\caption{Success rate under increasing team heterogeneity}
\label{Tab:Hetero}
\setlength{\tabcolsep}{4pt}
\small
\begin{tabular*}{\columnwidth}{@{\extracolsep{\fill}}lcccl@{}}
\toprule
Method & Weak heter. & Mid heter. & Strong heter. \\
\midrule
\textbf{Proposed method} & \textbf{0.671} & \textbf{0.656} & \textbf{0.667} \\
RoCo-NL & 0.631 & 0.583 & 0.526 \\
AutoGen & 0.728 & 0.610 & 0.567 \\
Centralized-LLM (fused)$^\dagger$ & 0.667 & 0.667 & 0.667 \\
Centralized-Classical$^\dagger$ & 0.733 & 0.733 & 0.733 \\
\bottomrule
\end{tabular*}
\end{table}

Table~\ref{Tab:Hetero} reports how the SR varies as the team heterogeneity increases from weak to strong.
From Table~\ref{Tab:Hetero}, we can see that the proposed method stays near-flat throughout, while AutoGen degrades monotonically as the heterogeneity grows and RoCo-NL stays lower and drops to its lowest under the strongest heterogeneity.
This shows that, as the team becomes more heterogeneous, the proposed method still maintains coordination, whereas the dialogue methods degrade.
This is because our delegates coordinate correctness to the deterministic DT resolver and depend on no single strong brain, requiring each arm only to express a local intent, whereas the coordination quality of the dialogue methods is directly dragged down by the language ability of the weakest participant. 


\begin{table}[t]
\centering
\caption{Effect of AutoGen dialogue rounds on communication and success rate (Sort, 30 seeds).}
\label{Tab:Rounds}
\setlength{\tabcolsep}{4pt}
\small
\begin{tabular*}{\columnwidth}{@{\extracolsep{\fill}}ccc@{}}
\toprule
Dialogue rounds $R$ & Communication data size\ (msg tok/ep) & SR \\
\midrule
1 & 377 & 0.437 \\
2 & 674 & 0.491 \\
3 & 978 & 0.552 \\
4 & 1{,}282 & 0.621 \\
6 & 1{,}895 & 0.633 \\
\bottomrule
\end{tabular*}
\end{table}

Table~\ref{Tab:Rounds} reports the communication overhead and SR of AutoGen as the number of dialogue rounds $R$ increases from $1$ to $6$. As shown in Table~\ref{Tab:Rounds}, the communication overhead increases by $5$ times, whereas the SR improves by only $44\%$. 
This indicates that dialogue-based methods cannot obtain proportional performance gains by simply increasing the number of negotiation rounds. The reason is that the gains from deeper negotiation quickly saturate and are disproportionate to its communication cost, as the residual coordination bottleneck shifts to physical execution.
Therefore, additional dialogue rounds cannot overcome the execution-level performance ceiling, while the proposed method can achieve a comparable SR with a single communication round.

\subsection{Ablation Study}

\begin{table*}[t]
\centering
\caption{Component ablation by removing one module of LDT-Coord.}
\label{Tab:Ablation}
\setlength{\tabcolsep}{6pt}
\small
\begin{tabular*}{\textwidth}{@{\extracolsep{\fill}}llcccc@{}}
\toprule
Variant & Removed & SR$^\dagger$ & Latency\ ($n{=}3$) & Communication data size (rep./step) & Invalid execution attempts\ (exec./ep) \\
\midrule
full & --- & 0.71 & {0.13}\,s & 0.30 & {5.48} \\
w/o DT & DT constraint resolver & 0.53 & 0.11\,s & 0.30 & {7.12} \\
w/o RL & learned gate $\pi^c$ & 0.70 & 0.13\,s & {1.51}  & 5.48 \\
\bottomrule
\end{tabular*}
\end{table*}

Table~\ref{Tab:Ablation} reports how the four metrics change when one core component is removed, where SR and latency are controlled and near-invariant (see the setup) so that each ablation degrades only its target metric (in bold). From Table~\ref{Tab:Ablation}, we can see that removing the DT constraint resolver raises the invalid execution attempts by about $30\%$ more collision trials per episode, and removing the learned gate raises the reporting needed to reach the same coordination quality by $5$ times reports per step.
This shows that the DT contributes to execution efficiency and the RL gate contributes to communication efficiency, and neither expresses its value through the success rate. 
This is due to the fact that DT vetoes mutual-exclusion conflicts with a one-bit STOP and blocks collisions before execution, while the learned gate reports only at the conflict-prone handoff steps and is communication-efficient. 
\subsection{Convergence and Mechanism: Dual Dynamics of the Learned Gate}

\begin{figure}[t]
\centering
\includegraphics[width=8.5cm]{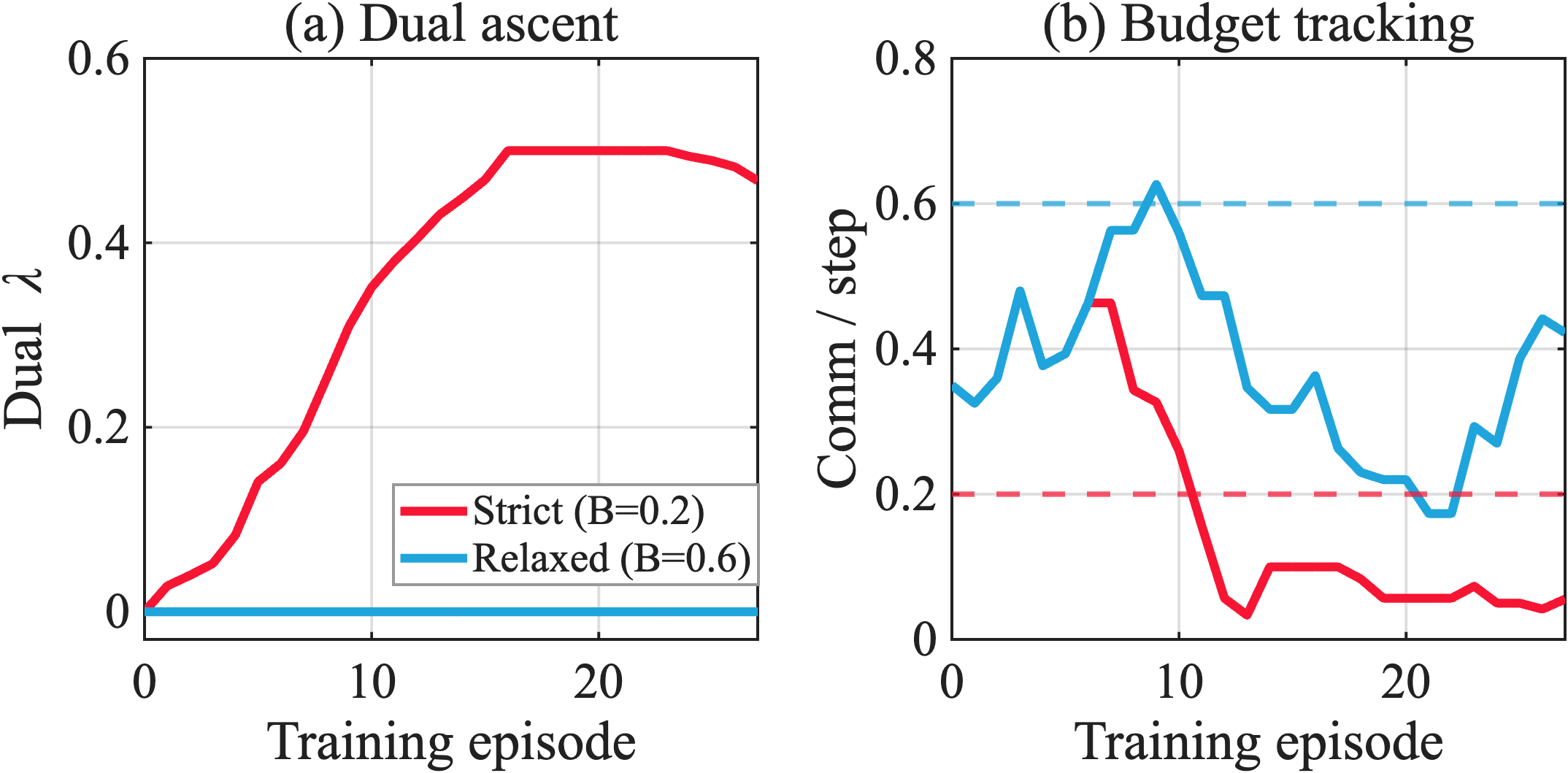}
\caption{Dual dynamics of the learned communication gate $\pi^c$ under tight and loose latency deadlines over training episodes: (a) dual variable $\lambda$, (b) per-step report count, (c) success rate.}
\label{fig_dual}
\end{figure}

Fig.~\ref{fig_dual} shows the dual dynamics of the learned communication gate $\pi^c$ under tight and loose latency deadlines as training proceeds, with the three panels giving the dual variable $\lambda$, the per-step report count, and the success rate.
From Fig.~\ref{fig_dual}, we can see that $\lambda$ rises monotonically with training and drives the per-step report count down into the latency budget, while the SR stays near the execution ceiling throughout.
This is because the PID dual ascent encodes the communication constraint into the reward, so that a tighter constraint yields a larger $\lambda$ and a heavier reporting penalty and forces the policy to report only at the most necessary conflict steps, reaching a low-communication, high-success tradeoff within the feasible region and satisfying the per-step latency deadline.

\section{Conclusion}
This paper presented LDT-Coord, a digital-twin coordination middleware for networks of heterogeneous LLM-driven embodied agents. 
In LDT-Coord, each agent perceives and decides autonomously and reports only a structured action together with typed temporal constraints, while the digital twin runs a single training-free orchestrator that unifies mutual-exclusion, synchronization, and dependency atomic tasks into four orchestration rules and applies them iteratively until convergence.
To further reduce the upload communication overhead, a PPO-Lagrangian-based communication-efficient layer is introduced.
Simulation results show that the proposed coordinate-without-training and communicate-by-learned-optimization framework can enable efficient collaboration among embodied agent teams.
For future work, we will investigate collaborative perception among agent teams to extend each agent’s perceptual boundary.

\appendix
\subsection{Proof of Proposition~\ref{prop:convergence}}
\label{app:convergence}
\begin{proof}
Each rule only removes elements, so the composite operator satisfies the shrinkage property $\Theta(\mathcal{A})\subseteq\mathcal{A}$. Let $\mathcal{A}^{(0)}=\mathcal{N}_t^{\mathrm{act}}$ and $\mathcal{A}^{(k+1)}=\Theta(\mathcal{A}^{(k)})$, so that $\mathcal{A}^{(0)}\supseteq\mathcal{A}^{(1)}\supseteq\cdots$ is a monotonically decreasing chain on the power-set lattice of the finite set $\mathcal{N}_t^{\mathrm{act}}$. Any nontrivial iteration removes at least one element; otherwise, $\mathcal{A}^{(k+1)}=\mathcal{A}^{(k)}$ is already a fixed point. Since the set is finite, the decreasing chain stabilizes after at most $K\le|\mathcal{N}_t^{\mathrm{act}}|$ steps, and the stable point is the greatest fixed point of the shrinkage operator on this finite power-set lattice~\cite{Kung1981Optimistic}. 
This ends the proof.
\end{proof}

\def\baselinestretch{1.0}
\bibliographystyle{IEEEtran}
\bibliography{references}

\end{document}